\def\eqref#1{equation~\ref{#1}}
\def\1{\bm{1}}
\DeclareMathAlphabet{\mathsfit}{\encodingdefault}{\sfdefault}{m}{sl}
\SetMathAlphabet{\mathsfit}{bold}{\encodingdefault}{\sfdefault}{bx}{n}
\pgfplotsset{compat=1.18} 
\newcommand{\methodName}[0]{FairerCLIP}
\newcommand{\review}[1]{#1}
\theoremstyle{definition}
\newtheorem{definition}{Definition}
\newtheorem{theorem}{Theorem}
\newtheorem{lemma}[theorem]{Lemma}
\newtheorem{theorem1}{Theorem}
\newtheorem{lemma1}[theorem1]{Lemma}
\newcommand{\nn}{\nonumber}
\newcommand{\cov}{\mathbb{C}\text{ov}}
\newcommand{\indep}{\perp\!\!\!\perp}
\edef\gaussR{0}
\edef\gaussA{0}
    \def\pgfmathresult{0.00001}%
  \global\let\gaussR=\pgfmathresult%
  \global\let\gaussA=\pgfmathresult%
  \pgfmathmultiply@{\pgfmathresult}{\gaussR}%
  \pgfmathmultiply@{\gaussR}{\pgfmathresult}%
  \pgfmathmultiply@{\pgfmathresult}{-2}%
  \let\@radius=\pgfmathresult%
  \pgfmathmultiply@{\pgfmathresult}{\@radius}%
    \def\pgfmathresult{0.00001}%
  \let\@tmp=\pgfmathresult%
    \def\pgfmathresult{0.00001}%
  \pgfmathinvgauss@{\pgfmathresult}{\@tmp}%
\crefname{section}{Sec.}{Secs.}
\Crefname{section}{Section}{Sections}
\Crefname{table}{Table}{Tables}
\crefname{table}{Tab.}{Tabs.}
\crefname{appendix}{App.}{Apps.}
\Crefname{appendix}{Appendix}{Appendices}
\Crefname{lemma}{lemma}{lemma}
\newcommand{\Cross}{$\mathbin{\tikz [x=1.4ex,y=1.4ex,line width=.2ex, red] \draw (0,0) -- (1,1) (0,1) -- (1,0);}$}%
\title{\review{\methodName{}: Debiasing CLIP's Zero-Shot Predictions using Functions in RKHSs}}
\author{%
  Sepehr Dehdashtian\textsuperscript{\normalfont *} \quad Lan Wang\thanks{Equal Contribution.} \quad Vishnu Naresh Boddeti \\
  Michigan State University\\
  \texttt{\{sepehr, wanglan3, vishnu\}@msu.edu}
}
\begin{document}

\maketitle

\begin{abstract}
   Large pre-trained vision-language models \review{such as CLIP} provide compact and general-purpose representations of text and images that are demonstrably effective across multiple downstream \review{zero-shot prediction} tasks. However, owing to the nature of their training process, these models have the potential to 1) propagate or amplify societal biases in the training data and 2) learn to rely on spurious features. This paper proposes \methodName{}, a general approach for making zero-shot predictions of \review{CLIP} more fair and robust to spurious correlations. We formulate the problem of jointly debiasing \review{CLIP's} image and text representations in reproducing kernel Hilbert spaces (RKHSs), which affords multiple benefits: 1) \emph{Flexibility:} Unlike existing approaches, which are specialized to either learn with or without ground-truth labels, \methodName{} is adaptable to learning in both scenarios. 2) \emph{Ease of Optimization:} \methodName{}{} lends itself to an iterative optimization involving closed-form solvers, which leads to $4\times$-$10\times$ faster training than the existing methods. 3) \emph{Sample Efficiency:} Under sample-limited conditions, \methodName\ significantly outperforms baselines when they fail entirely. And, 4) \emph{Performance:} Empirically, \methodName{} achieves appreciable accuracy gains on benchmark fairness and spurious correlation datasets over their respective baselines.
\end{abstract}

\vspace{-0.5em}
\section{Introduction\label{sec:introduction}}
\vspace{-0.5em}
Vision-Language Models \review{such as CLIP}~\citep{radford2021learning} are trained on large-scale datasets of image-text pairs to learn representations that have high similarity for related image-text pairs. While these models have gained significant attention in recent years due to their remarkable zero-shot classification capabilities, they are not flawless. There is growing evidence that \review{such models}  suffer from biases w.r.t. demographic (e.g., sex or skin tone) attributes~\citep{agarwal2021evaluating, wang2021gender, birhane2023hate, birhane2023into, dehdashtian2024utilityfairness} and even non-demographic(e.g., image background or illumination) attributes ~\citep{du2022learning, zhang2022contrastive}.

\begin{wrapfigure}[6]{r}{0.4\textwidth}
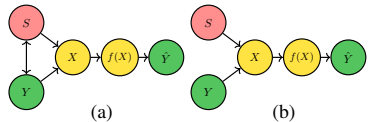

    \centering
    \vspace{-1.5em}
    \includestandalone[width=0.9\linewidth]{figs/causal-graphs}
    \caption{\footnotesize Dependence graphs for debiasing. \label{fig:causal-graph}}
\end{wrapfigure}
The above-mentioned biases can be viewed based on dependencies between the data attributes. We show these dependencies in \cref{fig:causal-graph}: $X$ is the data (e.g., face images) that depends on some attributes, including $Y$, the target attribute we wish to predict, and $S$, the attribute that leads to bias. The goal of bias mitigation is to ensure that the prediction $\hat{Y}$ is independent of $S$. We group the biases into those arising from two scenarios: (1) $Y$ and $S$ are dependent (\cref{fig:causal-graph} a): for example, \emph{high cheekbones} as $Y$ and \emph{sex} as $S$ since males typically have higher cheekbones than females. We refer to this type of correlation as \textbf{intrinsic dependence}. (2) $Y$ and $S$ are independent (\cref{fig:causal-graph} b): for example \emph{hair color} as $Y$ and \emph{sex} as $S$ since the hair color of a person does not depend on their sex. In this case, we refer to any observed correlation as a \textbf{spurious correlation}.

Several efforts~\citep{zhang2022contrastive, gao2021clip, kumar2022fine, kirichenko2022last, chuang2023debiasing, wortsman2022robust, an2023more, adila2023zero}, have been made to debias \review{zero-shot predictions from CLIP models}. However, they are limited in either one or more respects: (1) \textbf{Type of Bias:} Existing \review{CLIP} debiasing methods only consider spurious correlations in the data (\cref{fig:causal-graph} b) and do not seek to address bias induced by pairs of attributes with intrinsic dependencies (\cref{fig:causal-graph} a), (2) \textbf{Labels for training:} All existing approaches are tailored to train/fine-tune either with (supervised) or without (unsupervised) ground-truth labels and, as such, cannot be employed in both scenarios, (3) \textbf{Efficiency:} Some approaches adopt iterative methods to debias the features. However, they are computationally expensive to \emph{train}, i.e.,  slow to converge, leading to high training latency and many parameters in the debiasing modules, increasing model sizes even further.

\begin{figure}[t]
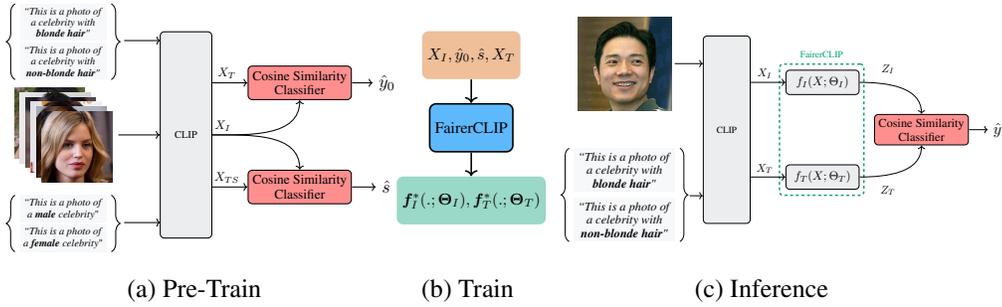

    \centering
    \includestandalone[width=\linewidth]{figs/overview}
    \caption{Overview of the train and inference phases of \methodName. (a) Shows the label prediction step. When labels are not available for training, \methodName\ uses cosine similarity between the $X_{T}$ and $X_I$, and $X_{TS}$ and $X_I$ to predict the target attributes and sensitive attributes, respectively. (b) Shows the inputs and outputs for \methodName\ in its training stage. \methodName\ uses representation of images and the corresponding text prompts that are constructed by target attribute ($Y$) along with the predicted labels to find the image and text encoders, i.e., $\bm f^*_I(.; \bm \Theta_I)$ and $\bm f^*_T(.; \bm \Theta_T)$. (c) Shows the inference phase of \methodName\ in which we use the trained image and text encoders to generate debiased representations from the ones generated by CLIP. \label{fig:overview}}
    \vspace{-0.5cm}
\end{figure}

We propose \methodName\ to address the aforementioned limitations of existing debiasing approaches. \methodName\ affords sufficient flexibility to mitigate bias arising from both spurious correlations and intrinsic dependencies and, in both scenarios, learn with or without ground-truth labels. \methodName\ utilizes a non-parametric measure of statistical dependence that accounts for all linear and non-linear relations between the debiased representation and the sensitive attribute of interest. Our formulation lends itself to alternating optimization, with each update having a closed-form solution and, in comparison to baselines, enjoying fast training convergence and requiring fewer parameters to train. An overview of \methodName\ in its train and inference phases along with how we integrate this transformation over the underlying CLIP model is shown in \cref{fig:overview}.

\textbf{Summary of Contributions:} (1) We demonstrate that a single general method can debias the image and text features from \review{frozen CLIP backbones} under different scenarios more effectively than those specialized for each scenario. The scenarios include accounting for both spurious correlations and intrinsic dependencies (\cref{sec:exp:results}), learning with and without ground-truth labels (\cref{sec:exp:results}), and learning from small and medium-sized datasets (\cref{sec:data-size}). (2) We demonstrate that kernel methods are particularly effective compared to shallow MLPs when operating on features and optimizing possibly competing objectives, as is the case for debiasing \review{CLIP} representations. They enjoy closed-form solutions that allow for significantly faster training, can scale to medium-sized datasets, and are more effective under limited training data (\cref{sec:app:runtimes}, \cref{sec:complexity}, and \cref{tab:comp-fairness} (left)).

\vspace{-0.5em}
\section{The Debiasing \review{CLIP} Representations Problem\label{sec:problem-definition}}
\vspace{-0.3em}
\noindent\textbf{Notation:} Scalars are denoted by regular lower case letters, e.g. $r$, $\tau$. Deterministic vectors are denoted by boldface lowercase letters, e.g., $\bm x$, $\bm s$. We denote both scalar-valued and multi-dimensional Random Variables (RVs) by regular uppercase letters, e.g. $X$, $S$. Deterministic matrices are denoted by boldface uppercase letters, e.g. $\bm H$, $\bm \Theta$, and the entry at $i^{th}$ row, $j^{th}$ column of matrix $\bm M$ is denoted by $\left(\bm M \right)_{ij}$ or $m_{ij}$. $\bm I_n$ or simply $\bm I$ denotes an $n\times n$ identity matrix, $\bm 1_n$ or $\bm 1$ and $\bm 0_n$ or $\bm 0$  are $n\times 1$ vector of ones and zeros, respectively. We denote the trace of any square matrix $\bm K$ by $\text{Tr}[\bm K]$. Finite or infinite sets are denoted by calligraphy letters, e.g., $\mathcal H$, $\mathcal A$.

\noindent\textbf{Problem Setup:} We assume that the joint RV $(X_I, X_T, Y, S)$ contains the pre-trained image features $X_I\in \mathbb R^{d}$, pre-trained text features \review{of target attribute} $X_T\in \mathbb R^{d}$, target attribute $Y \in \mathbb{R}^{d_Y}$, and sensitive attribute $S\in \mathbb R^{d_S}$. Their joint distribution will be $\bm p_{X_I, X_T, Y, S}$. Furthermore, $Y$ and $S$ can also belong to any finite set, such as a categorical set. 

Our aim is to debias $X_I$ and $X_T$ by generating representations, $Z_I = \bm f_I(X_I)$ and $Z_T = \bm f_T(X_T)$, with no or reduced \textit{statistical dependence} on $S$. To measure this dependency, we need to employ a metric capable of capturing both linear and non-linear statistical dependencies.

\noindent\textbf{Choice of Dependence Measure:} We will adopt a simplified definition of the Hilbert-Schmidt Independence Criterion (HSIC)~\citep{gretton2005kernel} introduced by \citet{sadeghi2022on}, defined as,
\begin{equation}
\text{Dep}(Z, S) :=\sum_{j=1}^r \sum_{\beta \in \mathcal U_S } \cov^2\left(Z_j, \beta(S)\right),
\label{eq:def-dep}
\end{equation}
where $\mathcal{U}_S$ is a countable orthonormal basis set for the separable universal RKHS $\mathcal{H}_S$, and $Z_j=f_j(X)$ for $f_j \in \mathcal{H}_X \forall j = 1, ..., r$. $\text{Dep}(Z,S)$ can be estimated (see Lemma 1 in \cite{sadeghi2022on}) empirically as,
\begin{equation}\label{eq:dep}
\text{Dep}(Z, S) := \frac{1}{n^2}\left\|\bm \Theta \bm K_X \bm H \bm L_S \right\|^2_F,
\end{equation}
where $n$ is the number of data samples, $\bm K_X \in \mathbb{R}^{n\times n}$ is the Gram matrix corresponding to $\mathcal{H}_X$, $\bm \Theta$ is the encoder parameter in $\bm f(X) = \bm \Theta [ k_{X_1}, k_{X_2}, \cdots, k_{X_n}]^T$, $\bm H = \bm I_n-\frac{1}{n} \bm 1_n \bm 1_n^T$ is the centering matrix, and $\bm L_S$ is a full column-rank matrix corresponding to the Cholesky factorization of $K_{S}$, i.e., $\bm L_S \bm L_S^T=\bm K_{S}$. 
\review{While HSIC and related measures like Maximum Mean Discrepancy (MMD)~\cite{gretton2012kernel} have been employed by prior fairness approaches~\citep{bahng2020learning, quadrianto2019discovering, jung2021fair}, the HSIC variation we use in \cref{eq:dep}} has several attractive properties~\citep{sadeghi2022on}. This includes a convergence rate of $\mathcal{O}(n^{-1/2})$\footnote{\review{In scenarios where only a single or few samples are available, to an extent, heavy data augmentation can compensate for the lack of sufficient samples to accurately estimate $\text{Dep}$. However, this is beyond the scope of this paper, and all our experiments are performed without data augmentation.}}, a practical ability to capture all non-linear modes of dependencies when projecting from a high-dimensional representation to a low-dimensional representation, and, as we demonstrate next, analytical tractability.

In addition to the above-mentioned dependence metric, we also need another metric that can mimic the cosine similarity-based classification employed by \review{CLIP}. This metric will be used to make the representations of images and their corresponding text prompts align with each other to improve the accuracy of the predictions. As a result, we modify the definition of Dep metric in \cref{eq:def-dep} and use a linear kernel as $\beta$ in Lemma~\ref{lemma:emp}.   
\begin{lemma}
\label{lemma:emp}
Let $\bm K_{X_I},\bm K_{X_T}\in \mathbb R^{n\times n}$ be the Gram matrices corresponding to $\mathcal H_{X_I}$ and $\mathcal H_{X_T}$, respectively, i.e., $\left(\bm K_{X_I}\right)_{ij}=k_{X_I}(\bm x_{I_i}, \bm x_{I_j})$ and $\left(\bm K_{X_T}\right)_{ij}=k_{X_T}(\bm x_{T_i}, \bm x_{T_j})$, where covariance is empirically estimated as
\begin{eqnarray}
\cov\left(f_j(X_I), g_m(X_T) \right)\approx \frac{1}{n}\sum_{i=1}^n f_j(\bm x_{I_i}) g_m(\bm x_{T_i})
 -\frac{1}{n^2}\sum_{p=1}^n f_j(\bm x_{I_p}) \sum_{k=1}^n g_m(\bm x_{T_k}).\nn
\end{eqnarray}
It follows that, the corresponding empirical estimator for $\text{Dep}\left(Z_I, Z_T\right)$ is
\begin{eqnarray}\label{eq:empirical-form}
\text{Dep}\left(Z_I, Z_T\right)&=&\frac{1}{n^2}\left\|\bm{\Theta}_I \bm K_{X_I} \bm H \bm K_{X_T} \bm \Theta^T_T \right\|^2_F,
\end{eqnarray}
\end{lemma}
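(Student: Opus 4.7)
The plan is to specialize the definition of $\text{Dep}$ in \cref{eq:def-dep} to the setting where the basis $\mathcal U$ of the second argument corresponds to a linear kernel, and then collapse the resulting double sum into a Frobenius norm by writing everything in matrix form.

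First, I would invoke the fact that when the RKHS associated with $Z_T$ is induced by a linear kernel, a countable orthonormal basis $\mathcal U_{Z_T}$ can be taken to be the coordinate projections $\beta_m(\bm z) = z_m$. Substituting into \cref{eq:def-dep} with $Z_T$ playing the role of $S$ reduces the definition to
\begin{equation*}
\text{Dep}(Z_I, Z_T) \;=\; \sum_{j,m} \cov^2\bigl(f_j(X_I),\, g_m(X_T)\bigr),
\end{equation*}
so the problem becomes one of assembling a matrix of empirical covariances and taking its squared Frobenius norm.

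Next I would express the evaluated encoders in matrix form. From the kernel expansion $\bm f_I(X_I) = \bm\Theta_I[k_{X_{I_1}},\dots,k_{X_{I_n}}]^T$, the value $f_j(\bm x_{I_i})$ is exactly the $(j,i)$ entry of $\bm\Theta_I \bm K_{X_I}$, and analogously $g_m(\bm x_{T_i})$ is the $(m,i)$ entry of $\bm\Theta_T \bm K_{X_T}$. Substituting these into the empirical covariance formula stated in the lemma, the ``mean of products'' term yields $\tfrac{1}{n}\bm\Theta_I \bm K_{X_I}\bm K_{X_T}^T\bm\Theta_T^T$ and the ``product of means'' term yields $\tfrac{1}{n^2}(\bm\Theta_I \bm K_{X_I}\bm 1_n)(\bm\Theta_T \bm K_{X_T}\bm 1_n)^T$. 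These merge into a single centered expression via the matrix $\bm H = \bm I_n - \tfrac{1}{n}\bm 1_n\bm 1_n^T$, so that the $(j,m)$ entry of the empirical covariance matrix equals $\tfrac{1}{n}(\bm\Theta_I \bm K_{X_I}\bm H\bm K_{X_T}\bm\Theta_T^T)_{jm}$ after invoking symmetry of $\bm K_{X_T}$.

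Summing the squared entries over $j$ and $m$ is by definition the squared Frobenius norm, which yields the claimed identity with the prefactor $1/n^2$. The only genuinely delicate step is the first one: one needs the standard fact that the coordinate functionals form an orthonormal basis of the linear-kernel RKHS so that the abstract $\beta$-sum in \cref{eq:def-dep} reduces to a finite sum over coordinates. Everything after that is routine matrix bookkeeping, with the only small subtlety being that the two terms of the empirical covariance merge through $\bm H$ rather than contributing separately.
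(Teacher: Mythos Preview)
Your proposal is correct and follows essentially the same route as the paper's proof: start from the double sum $\sum_{j,m}\cov^2(f_j(X_I),g_m(X_T))$, use the representer-theorem expansion to write each evaluation as an entry of $\bm\Theta\bm K_X$, merge the two covariance terms through $\bm H$, and collapse the sum of squared entries into a Frobenius norm. The only difference is that you spell out the linear-kernel/coordinate-basis justification for why the abstract $\beta$-sum in \cref{eq:def-dep} collapses to a finite coordinate sum, whereas the paper simply takes that reduced form as the starting point.
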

where $\bm \Theta_I$ and  $\bm \Theta_T$ are the parameters of the image and text encoders, respectively, and $\bm K_{X_I}$ and $\bm K_{X_T}$ are the kernel matrices for the image and text features, respectively.
\begin{proof}
The main idea for proving equality~\eqref{eq:empirical-form} is to employ the representer theorem to express $f_j$ and $g_m$. The complete proof is available in the supplementary material.
\end{proof}

\noindent\textbf{Objective Function:} After choosing the appropriate dependence measure, we now define our objective function. 
\review{Our goal is to mitigate bias in CLIP's zero-shot predictions by debiasing the underlying representations}. This can be achieved by (1) reducing the information related to the sensitive attribute while (2) preserving information about the target attribute as much as possible in the pair of image-text representations and (3) keeping the image and corresponding text representations aligned with each other.

We formulate the above-mentioned learning objective through the following optimization problem.
\begin{definition}\label{def:1}
\begin{equation}
\label{eq:main}
\begin{aligned}
\sup_{\bm f_I \in \mathcal A^I_r, \bm f_T \in \mathcal A^T_r} \big\{J\left(\bm f_I, \bm f_T, \tau_I, \tau_T, \tau_z\right)&:=\text{Dep}\left(Z_I, Y\right) - \tau_I\text{Dep}\left(Z_I, S\right) \\
&+ \text{Dep}\left(Z_T, Y\right) - \tau_T\text{Dep}\left(Z_T, S\right) \\
&+ \tau_z\text{Dep}\left(Z_I, Z_T\right)\big\}
\end{aligned}
\end{equation}
where $\mathrm{Dep}(\cdot, \cdot)\geq 0$ is the statistical dependence measure defined in \cref{eq:dep}. $\mathrm{Dep}(Q, U) = 0$ implies $Q$ is independent of $U$ (i.e., $Q \indep U$), and $\mathrm{Dep}(Q, U) > 0$ implies $Q$ is dependent on $U$ (i.e., $Q \not\indep U$), with larger values indicating greater degrees of dependence. $\tau_I$ and $\tau_T$ control the contribution of the corresponding debiasing terms and $\tau_z$ controls the alignment of the debiased image and text features $Z_I=\bm{f}_I(X_I)$ and $Z_T=\bm{f}_T(X_Y)$, respectively.
\end{definition}

In the above definition, the terms $\text{Dep}(Z_I, Y)$ and $\text{Dep}(Z_T, Y)$ contribute to maximizing the statistical dependence between the representations and the target label $Y$, the terms $-\tau_I\text{Dep}(Z_I, S)$ and $-\tau_T\text{Dep}(Z_T, S)$ seek to make the representations independent of $S$, and the term $\tau_z\text{Dep}(Z_I, Z_T)$ ensures that the text and image features are still aligned with each other after debiasing.

\noindent\textbf{Choice of Encoder:}
We construct the mappings through a set of $r$ functions from $\mathbb R^{d_X} \rightarrow \mathbb R$ in a reproducing kernel Hilbert space (RKHS) $\left(\mathcal H_X,\, k_X(\cdot, \cdot)\right)$, such as the RBF Gaussian kernel. Hence, the representation $Z$ can be expressed as,
\begin{eqnarray}\label{eq:f}
 Z = \bm f(X):=\left[ Z_1, \cdots,Z_r \right]^T
\in \mathbb R^r,\quad Z_j=f_j(X),f_j\in \mathcal H_X \ \forall{j=1,\dots, r},
\end{eqnarray}
where $r$ is the dimensionality of the transformed representation.

 Our choice of RKHS is motivated by several reasons. As we observe in \cref{eq:main}, debiasing is inherently an optimization problem with multiple competing objectives. In such cases, optimization is the primary bottleneck rather than model expressivity. This was also observed in~\citet{sadeghi2022on}. The closed-form solution afforded by our approach mitigates the optimization challenges (\cref{sec:app:runtimes} and \cref{sec:app:abl}). RKHS has nice universal approximation properties and has performance comparable to shallow MLPs while being more computationally efficient for training (\cref{sec:app:runtimes}) and is performant under limited data scenarios (\cref{sec:exp:results}).

\vspace{-0.5em}
\section{A Solution to The Debiasing \review{CLIP} Representations Problem\label{sec:approach}}
Given the choice of dependence measure in \cref{eq:dep}, the optimization problem in \cref{eq:main} can be expressed as,
{\small
\begin{equation}
    \label{eq:norm}
    \begin{aligned}
    \max_{\bm f_I \in A_r, \bm f_T \in A_r} \big\{J\left(\bm f_I, \bm f_T, \tau_I, \tau_T, \tau_z, \bm X_I, \bm X_T, \bm Y, \bm S\right)&:= \frac{1}{n^2}\left\|\bm \Theta_I \bm K_{X_I} \bm H \bm L_Y \right\|^2_F - \tau_I \frac{1}{n^2}\left\|\bm \Theta_I \bm K_{X_I} \bm H \bm L_S \right\|^2_F \\
    & \frac{1}{n^2}\left\|\bm \Theta_T \bm K_{X_T} \bm H \bm L_Y \right\|^2_F - \tau_T \frac{1}{n^2}\left\|\bm \Theta_T \bm K_{X_T} \bm H \bm L_S \right\|^2_F \\
    & + \tau_z \frac{1}{n^2}\left\|\bm{\Theta}_I \bm K_{X_I} \bm H \bm K_{X_T} \bm \Theta^T_T \right\|^2_F \big\}
    \end{aligned}
\end{equation}}

\begin{figure}[t]
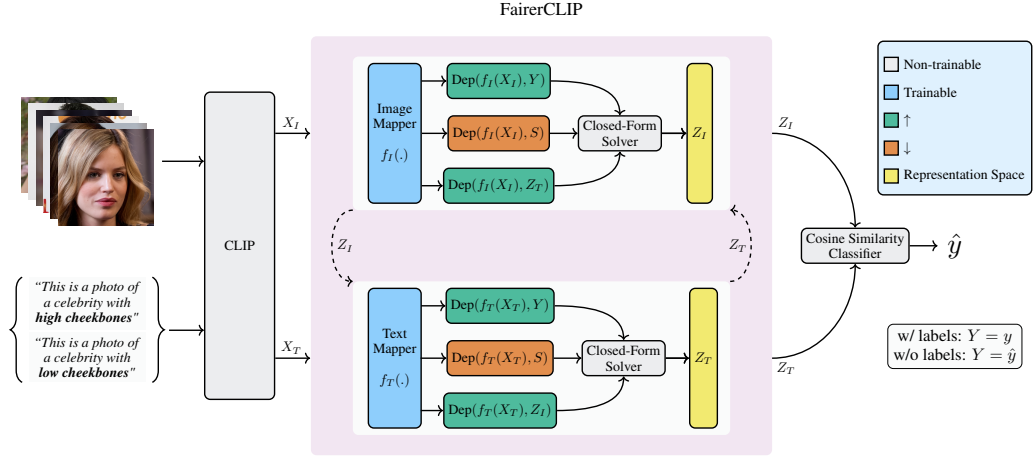

    \centering
    \begin{subfigure}{1.0\textwidth}
        \centering
        \includestandalone[width=\linewidth]{figs/FairVLM-v5}
    \end{subfigure}
    \caption{\methodName\ acts on representations extracted from \review{a frozen CLIP model}. It has two mapping functions, $\bm f_I$ and $\bm f_T$, for the image and text representations. These functions are learned through an alternating optimization algorithm with two closed-form solvers. When ground-truth labels are unavailable for training, \methodName\ learns from pseudo-labels $\hat{y}$, which are initialized from \review{CLIP's zero-shot predictions} and refined iteratively. The bold words in the input text prompts are the information of the target task label included in the text prompts.\label{fig:train-overview}}
    \vspace{-1em}
\end{figure}

Our solution to the constrained optimization problem in \cref{eq:norm} is based on the observation that it has a closed-form solution when either $\bm f_I$ or $\bm f_T$ are fixed. Let $\bm Z_O$ be the feature corresponding to the fixed parameter and $\bm f$ the optimization parameter of the other feature of interest. Then \cref{eq:norm} reduces to two optimization problems of the following general form,
{\footnotesize
\begin{equation}
    \label{eq:fixed}
    \max_{\bm f \in A_r} \{J\left(\bm f, \tau, \tau_z, \bm X, \bm Y, \bm S, \bm Z_O\right):=\frac{1}{n^2}\left\|\bm \Theta \bm K_{X} \bm H \bm L_Y \right\|^2_F - \tau \frac{1}{n^2}\left\|\bm \Theta \bm K_{X} \bm H \bm L_S \right\|^2_F + \tau_z \frac{1}{n^2}\left\|\bm{\Theta} \bm{K}_X \bm H \bm Z_O \right\|^2_F \}
\end{equation}}

This is easy to see since fixing either of the parameters in \cref{eq:norm} renders the terms involving them to a constant w.r.t. the variable of interest, and hence can be ignored during optimization.

\begin{theorem}\label{thm:main}
Let the Cholesky factorization of $\bm K_X$ be $\bm K_X=\bm L_X \bm L_X^T$,  where $\bm L_X\in \mathbb R^{n\times d}$ ($d\le n$) is a full column-rank matrix. Let $r\le d$, then a solution to~\cref{eq:fixed} is
\begin{eqnarray}
\bm f^{\text{opt}}(X) = \bm \Theta^{\text{opt}} \left[k_X(\bm x_1, X),\cdots, k_X(\bm x_n, X)\right]^T\nn,
\end{eqnarray}
where $\bm \Theta^{\text{opt}}=\bm U^T \bm L_X^\dagger$ and the columns of $\bm U$ are eigenvectors corresponding to the $r$ largest eigenvalues of the following generalized eigenvalue problem.
{\footnotesize
\begin{eqnarray}
\label{eq:eigen-f}
\bm L_X^T\left(\bm H\bm K_Y\bm H  - \tau\bm H\bm K_S\bm H + \tau_z\bm H \bm Z_O \bm Z_O^T \bm H\right)\bm{L}_X\bm u = \lambda \left(\frac{1}{n}\,\bm L_X^T \bm H \bm L_X + \gamma \bm I\right) \bm u.
\end{eqnarray}}

Furthermore, the objective value of~(\ref{eq:fixed}) is equal to $\sum_{j=1}^r\lambda_j$, where $\{\lambda_1,\cdots,\lambda_r\}$ are $r$ largest eigenvalues of~\cref{eq:eigen-f}.
\end{theorem}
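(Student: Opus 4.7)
The plan is to reduce the functional optimization in \cref{eq:fixed} to a finite-dimensional matrix problem, then to a generalized eigenvalue problem via a Cholesky-based change of variables. First, I would invoke the representer theorem: because the objective functional depends on each $f_j\in \mathcal H_X$ only through its values at $\{\bm x_1,\dots,\bm x_n\}$ (and on the squared RKHS norm, if the regularization $\gamma$ is interpreted as a Tikhonov penalty), the optimum lies in $\mathrm{span}\{k_X(\bm x_i,\cdot)\}$. Hence we may write $f_j(X)=\sum_i (\bm\Theta)_{ji}\,k_X(\bm x_i,X)$, which reduces the optimization to one over the matrix $\bm\Theta\in\mathbb R^{r\times n}$ and justifies the claimed parametric form of $\bm f^{\text{opt}}$.

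Next I would rewrite every Frobenius norm as a trace, $\|\bm M\|_F^2=\Tr(\bm M\bm M^T)$, and use $\bm L_Y\bm L_Y^T=\bm K_Y$, $\bm L_S\bm L_S^T=\bm K_S$, together with $\bm H^T=\bm H$, to merge the three terms of \cref{eq:fixed} into
\begin{equation*}
J(\bm\Theta)\;=\;\tfrac{1}{n^2}\,\Tr\!\Big(\bm\Theta\,\bm K_X\bm H\,\bm M\,\bm H\bm K_X\,\bm\Theta^T\Big),
\qquad
\bm M:=\bm K_Y-\tau\bm K_S+\tau_z\,\bm Z_O\bm Z_O^T .
\end{equation*}
Substituting the Cholesky factorization $\bm K_X=\bm L_X\bm L_X^T$ and introducing the change of variable $\bm U:=(\bm L_X^T\bm\Theta^T)\in\mathbb R^{d\times r}$ (equivalently $\bm\Theta=\bm U^T\bm L_X^\dagger$ on the range of $\bm L_X^T$), the numerator becomes $\Tr(\bm U^T\bm A\bm U)$ with $\bm A=\bm L_X^T(\bm H\bm K_Y\bm H-\tau\bm H\bm K_S\bm H+\tau_z\bm H\bm Z_O\bm Z_O^T\bm H)\bm L_X$, i.e., exactly the matrix on the LHS of \cref{eq:eigen-f}.

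Since the objective is positively 2-homogeneous in $\bm\Theta$ (and in $\bm U$), the unconstrained problem is unbounded; I would bound it by the natural normalization $\tfrac{1}{n}\Tr(\bm\Theta\bm K_X\bm H\bm K_X\bm\Theta^T)+\gamma\|\bm\Theta\bm L_X\|_F^2=\mathrm{const}$, which, after the same change of variables, reads $\bm U^T\bm B\bm U=\bm I_r$ with $\bm B=\tfrac{1}{n}\bm L_X^T\bm H\bm L_X+\gamma\bm I$. The problem is therefore
\begin{equation*}
\max_{\bm U\in\mathbb R^{d\times r}}\ \Tr(\bm U^T\bm A\bm U)\quad\text{s.t.}\quad \bm U^T\bm B\bm U=\bm I_r .
\end{equation*}
This is the standard generalized trace-maximization problem: the classical Ky-Fan/Courant–Fischer argument (diagonalize via $\bm W=\bm B^{1/2}\bm U$ and apply the spectral theorem to $\bm B^{-1/2}\bm A\bm B^{-1/2}$) shows the maximizer is obtained by stacking eigenvectors of $\bm A\bm u=\lambda\bm B\bm u$ associated with the $r$ largest eigenvalues, and the optimal value equals $\sum_{j=1}^r\lambda_j$. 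Undoing the change of variables yields $\bm\Theta^{\text{opt}}=\bm U^T\bm L_X^\dagger$, proving the claim.

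\paragraph{Expected difficulty.}
The routine calculations—combining Frobenius norms, Cholesky substitution, trace maximization—should present no obstacle. The main subtlety is justifying the implicit normalization/Tikhonov term that produces $\bm B=\tfrac{1}{n}\bm L_X^T\bm H\bm L_X+\gamma\bm I$: without $\gamma\bm I$ the matrix $\bm L_X^T\bm H\bm L_X$ is only positive semidefinite (the centering matrix $\bm H$ has null-space $\mathrm{span}(\bm 1_n)$), so the generalized eigenproblem would be ill-posed. The $\gamma\bm I$ regularizer is what guarantees $\bm B\succ 0$ and hence that $\bm B^{-1/2}\bm A\bm B^{-1/2}$ is well-defined. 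The second subtlety is bookkeeping the pseudo-inverse $\bm L_X^\dagger$: since $\bm L_X$ is full column rank, $\bm L_X^\dagger=(\bm L_X^T\bm L_X)^{-1}\bm L_X^T$ and the correspondence $\bm\Theta\leftrightarrow\bm U$ is unique on the relevant subspace, which needs to be stated carefully to deliver the closed form $\bm\Theta^{\text{opt}}=\bm U^T\bm L_X^\dagger$.
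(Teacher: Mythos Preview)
Your plan matches the paper's proof almost step for step: representer theorem, Frobenius-to-trace rewrite, Cholesky change of variables $\bm V=\bm L_X^T\bm\Theta^T$, and the generalized-eigenvalue trace maximization (the paper cites Kokiopoulou et al.\ where you invoke Ky--Fan/Courant--Fischer). The one point to sharpen is the origin of the constraint: it is not a normalization you \emph{impose} to cure unboundedness, but the definition of the feasible set $\mathcal A_r$ already present in \cref{eq:fixed}. The paper's proof begins by showing that $\bm f\in\mathcal A_r$, i.e., $\cov(f_i(X),f_j(X))+\gamma\langle f_i,f_j\rangle_{\mathcal H_X}=\delta_{i,j}$ for all $i,j$, is---after the representer theorem and Cholesky substitution---exactly the matrix identity $\bm V^T\bm B\bm V=\bm I_r$ with $\bm B=\tfrac{1}{n}\bm L_X^T\bm H\bm L_X+\gamma\bm I$. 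Note this is the full $r\times r$ identity, not merely its trace; the scalar ``$=\mathrm{const}$'' you first wrote would not by itself force the $r$ distinct top eigenvectors. With that correction your ``Expected difficulty'' paragraph resolves itself: the $\gamma\bm I$ term appears because $\gamma\langle f_i,f_j\rangle_{\mathcal H_X}$ is built into $\mathcal A_r$, and the minimum-norm choice $\bm\Theta^{\text{opt}}=\bm U^T\bm L_X^\dagger$ among all $\bm\Theta$ satisfying $\bm L_X^T\bm\Theta^T=\bm U$ is precisely what the paper selects.
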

\begin{proof}
The objective in~\cref{eq:fixed} can be expressed as a trace optimization problem, which reduces to a generalized eigenvalue problem~\citep{kokiopoulou2011trace}. See the supplementary material for detailed proof.
\end{proof}

Building upon the above closed-form solution, we adopt alternating optimization to solve \cref{eq:norm}, by fixing $\bm f_I$ and solving for $\bm f_T$ and vice-versa (\cref{fig:train-overview}). The formulation in \cref{eq:main} requires labels of the downstream target task $Y$ and the sensitive labels $S$ to learn \methodName's parameters. While such labels are readily available for supervised learning \review{and partially available for semi-supervised learning~\citep{jung2022learning, chen2023project}}, this is not the case for \review{unsupervised learning}. Therefore, in this case, we initialize the labels $Y$ and $S$ by the original zero-shot predictions $\hat{Y}$ and $\hat{S}$ from \review{CLIP} (\cref{fig:overview} a). Then we refine $\hat{Y}$ by predicting it after every iterative update of $\bm f_I$ and $\bm f_T$. However, note that we do not update $\hat{\bm{S}}$ in the same way since our initial prediction of $\hat{S}$ has the most information about the label $S$, but as we debias the representations in the subsequent iterations, we remove the information of $S$. Therefore, updated values of $\hat{S}$ will lead to inaccurate estimates of $\text{Dep}(Z, S)$ and affect the overall optimization. This procedure is detailed in \cref{alg:fairvlm-training} of \cref{sec:app:tec-details}.

\vspace{-0.2em}
\subsection{A Geometric Illustration of \methodName{}}
\vspace{-0.2em}
\begin{figure}[ht]
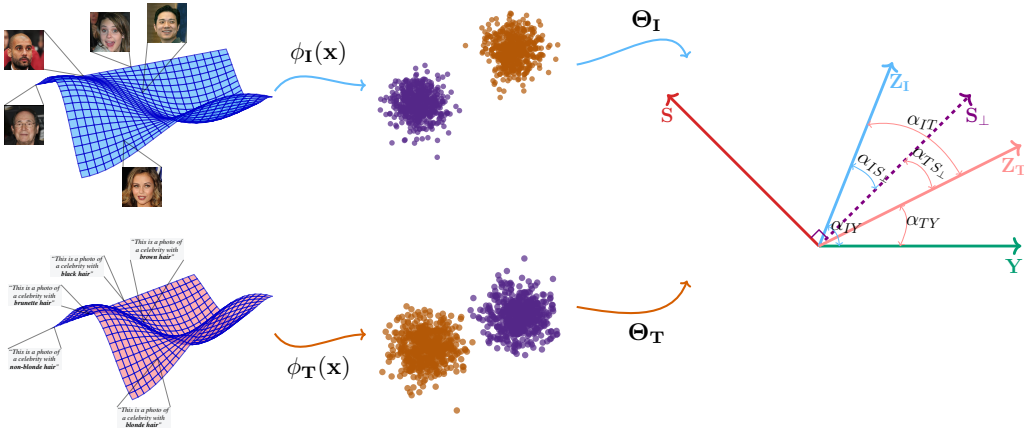

    \centering
    \includestandalone[width=\linewidth]{figs/intuitive}
    \caption{A geometric illustration of \methodName{} training steps. The encoder utilizes the implicit mapping functions $\phi_I(X)$ and $\phi_T(X)$ of the RBF kernel to map image and text features into an infinite-dimensional space, facilitating linear separability of samples with different target attributes. The optimization process seeks a direction that aligns with labels $Y$, statistically independent of $S$, and aligned with the other representation.}
    \label{fig:illustration}
    \vspace{-1em}
\end{figure}

A geometric illustration of the steps that \methodName\ takes to debias the representations is shown in Fig.~\ref{fig:illustration}. In theory, the RBF kernels used in our encoder ($\phi_I(X)$ and $\phi_T(X)$) map the image and text features into an infinite-dimensional space, where the samples corresponding to different target attributes are linearly separable. In the infinite-dimensional space, the encoder that optimizes~\cref{eq:main} for $\bm \Theta_I$ and $\bm \Theta_T$ by alternating between closed-form solvers and seeks a direction for mapping the image and text features that have low angular distance w.r.t. the direction of (1) $Y$ labels (small $\alpha_{IY}$ and $\alpha_{TY}$), (2) $S_{\perp}$ (small $\alpha_{IS_{\perp}}$ and $\alpha_{TS_{\perp}}$), and (3) the other representation (small $\alpha_{IT}$).

\vspace{-1em}
\section{Experimental Evaluation\label{sec:exp}}
\vspace{-0.3em}
We evaluate \methodName\ on datasets with spurious correlation and intrinsic dependence and compare it to several existing baselines. In summary, the experimental results indicate that the baseline methods are effective in mitigating spurious correlations, but they are not as effective at mitigating the bias caused by the intrinsic dependencies. In contrast, \methodName\ effectively and efficiently mitigates both spurious correlations and intrinsic dependencies in \review{CLIP's} zero-shot predictions. In all our experiments, to overcome the $\mathcal{O}(n^3)$ computational and $\mathcal{O}(n^2)$ memory complexity of the kernel matrices $\bm K$, we use random Fourier features (RFF)~\citep{rahimi2007random}. \review{All the implementation details are provided in \cref{sec:imp-details}}

\vspace{-0.1em}
\subsection{Datasets\label{sec:datasets}}
\vspace{-0.1em}
We evaluate \methodName\ on an assortment of \review{classification tasks across many} datasets. This includes \textbf{Waterbirds} \citep{sagawa2019distributionally}, which contains spurious correlations between the types of birds and background of the images, different settings of \textbf{CelebA} \citep{liu2015deep} that contains more than 200,000 face images of the celebrities in the wild annotated with 40 binary attributes and contains both spurious correlations and intrinsic dependencies among its attributes, \textbf{FairFace} dataset \citep{karkkainen2021fairface} which contains more than 108,000 face images from 7 different race groups (White, Black, Indian, East Asian, Southeast Asian, Middle Eastern, and Latino) collected from the YFCC-100M Flickr dataset and labeled with race, sex, and age groups, and \textbf{Chicago Face Database (CFD)}~\citep{ma2015chicago} which includes face images with different annotations such as facial attributes, ethnicity, age, and sex.

\vspace{-0.1em}
\subsection{Empirical Results\label{sec:exp:results}}
\vspace{-0.1em}
We report the results of \methodName\ and compare them with the performance of related baselines on various datasets and settings. Following the experimental settings of prior work~\citep{zhang2022contrastive, koh2021wilds, chuang2023debiasing}, we do not presume sensitive attributes ($S$) during the training process but assume them in the validation dataset for hyperparameter tuning and model selection, as proposed in ~\citet{koh2021wilds}. Thus, following prior work~\citep{zhang2022contrastive}, for \methodName\ and other baselines that need label $S$, we use the zero-shot predictions of $S$ ($\hat{S}$) from \review{CLIP} as the sensitive attribute.

\vspace{-0.1em}
\subsubsection{Mitigating Intrinsic Dependency\label{sec:exp:results:intrinsic}}
\vspace{-0.1em}
To evaluate the ability of \methodName\ to mitigate intrinsic dependency, we conduct numerical evaluations on the CelebA dataset with \emph{high cheekbones} as the target attribute and \emph{sex} as the sensitive attribute. As discussed in Sec.~\ref{sec:introduction}, these two attributes are intrinsically dependent. To measure the fairness of predictions, we employ the Equal Opportunity Difference (EOD) \citep{hardt2016equality} metric, defined as,
    $\text{EOD} := \left|P(\hat{Y} = 1 | Y = 1, S = 1) - P(\hat{Y} = 1 | Y = 1, S = 0)\right|$,  
where $S$ is the sensitive attribute, and $\hat{Y}$ and $Y$ are the predicted and the ground-truth target labels, respectively. Our choice of EOD is justified since other fairness definitions, like Demographic Parity Violation (DPV), are not well suited for many practical scenarios \citep{hardt2016equality, chouldechova2017fair}.

\begin{wraptable}{r}{0.45\linewidth}
\vspace{-0.3cm}
\caption{\small Fairness on the CelebA dataset with intrinsic dependency\label{tab:celeba-intrinsic}.  All values are in \%.}
\vspace{-0.25cm}
\scalebox{0.55}{
\begin{tabular}{lrrcrr}
        \toprule
        \multicolumn{1}{l}{\multirow{2}[4]{*}{Method}} & \multicolumn{2}{c}{CLIP ResNet-50} && \multicolumn{2}{c}{CLIP ViT-L/14} \\
        \cmidrule{2-3} \cmidrule{5-6}         & Avg & EOD && Avg   & EOD \\
        \midrule
        Zero-shot  {\tiny \citep{radford2021learning}} &  50.5     & 5.8 &&   48.8    & \underline{2.8} \\
        ERM Linear Probe {\tiny \citep{kumar2022fine}} &    84.8   &  19.0     &&   84.8    & 14.0 \\
        ERM Adapter {\tiny \citep{gao2021clip}} &   85.3   &    11.0   &&   84.6    & 14.0 \\
        DFR (Subsample) {\tiny \citep{kirichenko2022last}} &    83.2   & 4.2      &&  84.1     & 7.4 \\
        DFR (Upsample) {\tiny \citep{kirichenko2022last}} &   83.6    &   4.1    &&   84.1    & 6.6 \\
        Contrastive Adapter {\tiny \citep{zhang2022contrastive}} & 84.2 & \underline{1.0}  && 83.6 & 6.3 \\
        \methodName\ (ours) &   83.4    &  \textbf{0.02}     && 83.8 & \textbf{0.005} \\
        \bottomrule
        \end{tabular}%
        }
\vspace{-0.35cm}
\end{wraptable}Table~\ref{tab:celeba-intrinsic} compares the performance of \methodName\ and the baselines on the CelebA dataset with intrinsic dependency. For this experiment, we train all methods except the zero-shot baseline, which is included to demonstrate the level of unfairness in the \review{CLIP} features with the ground-truth labels. We observe that among all baselines, Contrastive Adapter~\citep{zhang2022contrastive} performs well and achieves appreciable EOD for the CLIP ResNet-50 model. However, most other methods seem to even amplify the bias in the original \review{CLIP} features while improving average accuracy. \methodName{} performs the best in terms of debiasing, achieving an EOD of 0.002\% and  0.005\% for CLIP ResNet-50 and CLIP ViT-L/14, respectively. Overall, \methodName\ is very effective at mitigating unfairness to a significant extent, achieving an EOD value close to zero while maintaining a high classification accuracy.

\vspace{-0.1em}
\subsubsection{Mitigating Spurious Correlation\label{sec:exp:results:spurious}}
\vspace{-0.1em}

To evaluate \methodName{}'s effectiveness in mitigating spurious correlation, we perform numerical experiments on spurious correlation benchmarks, Waterbirds and CelebA, following the settings in~\citet{zhang2022contrastive}. Since \methodName\ can be learned with or without ground-truth labels, we compare it against methods from these two categories. For performance evaluation, we use three metrics: 1) Average accuracy (\textbf{Avg.}), 2) Worst-Group accuracy (\textbf{WG}), i.e., the lowest accuracy of all subgroups, and 3) \textbf{Gap}, which is the difference between average and worst-group accuracy.

\begin{table}[htbp]
  \centering
  \caption{Comparison of prior methods and \methodName\ in terms of worst group accuracy (WG), average accuracy (Avg), and their gap on the WaterBirds and CelebA datasets. For the latter, the target and sensitive attributes are blonde hair and sex for two different \review{variants of CLIP}, CLIP ResNet-50 and CLIP ViT-L/14, in two different settings--w/ and w/o labels. \textbf{1st} / \underline{2nd} best results are in \textbf{bold} / \underline{underlined}.\label{tab:grouprobust}}
    \resizebox{\columnwidth}{!}{%
    \begin{tabular}{clccccccccccccccc}
    \toprule
    & \multirow{2}[4]{*}{Method / Acc.} & \multicolumn{7}{c}{CLIP ViT-L/14} && \multicolumn{7}{c}{CLIP ResNet-50} \\
    && \multicolumn{3}{c}{Waterbirds} && \multicolumn{3}{c}{CelebA} && \multicolumn{3}{c}{Waterbirds} && \multicolumn{3}{c}{CelebA}\\
    \cmidrule{3-5} \cmidrule{7-9} \cmidrule{11-13} \cmidrule{15-17} && WG ($\uparrow$) & Avg ($\uparrow$) & Gap ($\downarrow$) && WG ($\uparrow$) & Avg ($\uparrow$) & Gap ($\downarrow$) && WG ($\uparrow$) & Avg ($\uparrow$) & Gap ($\downarrow$) && WG ($\uparrow$) & Avg$\uparrow$ & Gap ($\downarrow$) \\
    \midrule
    \multirow{6}{*}{\begin{sideways}w/ labels\end{sideways}} 
    & ERM Linear Probe   {\tiny \citep{kumar2022fine}} & 65.4$\pm$0.5 & \underline{97.7$\pm$0.1} & 32.3$\pm$0.5 &       & 30.4$\pm$1.5 & \textbf{94.6$\pm$0.1} & 64.2$\pm$1.5 &&  13.2$\pm$0.7 & \underline{94.6$\pm$0.1} & 81.4$\pm$0.7 &       & 13.1$\pm$0.9 & \textbf{94.8$\pm$0.0} & 81.6$\pm$0.8\\
    & ERM Adapter{\tiny \citep{gao2021clip}} & 76.1$\pm$1.8 & \textbf{97.8$\pm$0.1} & 21.7$\pm$1.7 &       & 40.0$\pm$5.6 & \underline{94.3$\pm$0.3} & 54.3$\pm$5.6 && 63.0$\pm$0.4& \textbf{96.0$\pm$1.1} & 32.9$\pm$0.8&       & 41.9$\pm$4.5 & \underline{94.7$\pm$0.4} & 52.8$\pm$4.1 \\
    & DFR (Subsample){\tiny \citep{kirichenko2022last}} & 58.8$\pm$0.8 & 95.9$\pm$0.2 & 37.1$\pm$0.8 &       & 78.7$\pm$3.6 & 91.8$\pm$0.2 & 13.1$\pm$3.6 && 66.1$\pm$5.5 & 92.9$\pm$2.2 & 26.9$\pm$6.5 &       & 80.9$\pm$0.6 & 91.7$\pm$0.5 & 10.8$\pm$3.2\\
    & DFR (Upsample){\tiny \citep{kirichenko2022last}} & 66.5$\pm$0.8 & 96.4$\pm$0.9 & 29.8$\pm$1.5 &       & 83.9$\pm$2.3 & 91.2$\pm$0.8 & 7.2$\pm$3.1 && 54.2$\pm$6.2 & 90.3$\pm$2.0 & 36.1$\pm$7.9 &       & \textbf{89.9$\pm$0.2} & 91.3$\pm$0.3 & \textbf{1.4$\pm$0.5}\\
    & Contrastive Adapter {\tiny \citep{zhang2022contrastive}} & \underline{85.3$\pm$2.3} & 94.5$\pm$2.4 & \underline{9.3$\pm$1.1} &       & \underline{83.9$\pm$1.1} & 90.4$\pm$0.2 & \underline{6.4$\pm$1.1} &&  \textbf{82.5$\pm$0.9} & 88.2$\pm$2.6 & \textbf{5.7$\pm$3.1} &       & \underline{88.4$\pm$1.7} & 90.8$\pm$1.2 & \underline{2.5$\pm$1.5}\\
    & \methodName\ (ours) & \textbf{86.0$\pm$1.8} & 92.2$\pm$0.8 & \textbf{6.1$\pm$1.9} &       & \textbf{85.2$\pm$2.3} & 87.8$\pm$1.7 & \textbf{2.5$\pm$0.9} &&   \underline{75.4$\pm$1.9} & 84.3$\pm$2.2 &\underline{ 8.9$\pm$3.1} &       & 81.5$\pm$0.7 & 85$\pm$0.9 & 3.5$\pm$0.3 \\
    \midrule
    \multirow{3}{*}{\begin{sideways} \shortstack[c]{w/o \\ labels}\end{sideways}}
    & Zero-shot{\tiny \citep{radford2021learning}} & 45.3$\pm$0.0   & 84.4$\pm$0.0   & 39.1$\pm$0.0   &       & 72.8$\pm$0.0   & \underline{87.6$\pm$0.0 }  & 14.9$\pm$0.0  && 39.6$\pm$0.0   & 77.3$\pm$0.0   & 37.7$\pm$0.0   &       & 75.9$\pm$0.0   & 82.3$\pm$0.0   & 6.4$\pm$0.0\\
    & Orth-Cali{\tiny \citep{chuang2023debiasing}} & \underline{68.8 $\pm$ 0.0} & 	\underline{84.5 $\pm$ 0.0} & \underline{15.7$\pm$0.0} & & \underline{76.1$\pm$0.0} &	86.2$\pm$0.0 	& \underline{10.1$\pm$0.0} && \underline{74.0$\pm$0.0} 	& \underline{78.7$\pm$0.0} 	& \textbf{4.7$\pm$0.0} & & \textbf{82.2$\pm$0.0 }	& \underline{84.4$\pm$0.0} 	& \textbf{2.2$\pm$0.0} \\
    & \methodName\ (ours) & \textbf{78.1$\pm$1.4} & 	\textbf{85.1$\pm$1.1} &	\textbf{7.1$\pm$2.4} & & \textbf{86.1$\pm$0.8} 	& \textbf{88.0$\pm$1.0} 	& \textbf{1.9$\pm$0.6} && \textbf{74.8$\pm$1.7} &	\textbf{81.4$\pm$0.9} 	& \underline{6.6$\pm$2.5} & & \underline{80.4$\pm$1.0} 	& \textbf{84.7$\pm$0.7} & \underline{4.3$\pm$0.4}\\
    \bottomrule
    \end{tabular}}%
\end{table}%

\cref{tab:grouprobust} shows the results of our empirical evaluation. We make the following observations: (i) On CLIP ViT-L/14, \methodName\ has the lowest Gap and highest WG accuracy. (ii) For the CLIP ResNet-50, \methodName\ outperforms the baselines in the w/o labels setting but not in the w/ label setting. The discrepancy between the performance \methodName\ with CLIP ViT-L/14 and CLIP ResNet-50 can be attributed to the fact that CLIP ResNet-50 features contain less information about target attributes than CLIP ViT-L/14 features, as shown in \cref{sec:app:information}. Overall, the results of \cref{tab:grouprobust} indicate that \methodName\ effectively improves the worst group's accuracy and reduces the Gap. Notably, our approach can be applied to and is effective in both scenarios, with and without ground-truth labels. At the same time, the baselines are specialized to operate in one or the other scenario only.

In \cref{tab:comp-fairness}, we evaluate \methodName\ on the FairFace dataset. Here, we consider sex and race as the sensitive attributes and follow the experimental setup in \citet{chuang2023debiasing}. We use five target attributes and ten text prompts (2 prompts per attribute) unrelated to the samples' facial or sensitive attributes; we do not have access to ground-truth labels. As an example, the text prompt can be \textit{``A photo of a \textbf{criminal} person"} or \textit{``A photo of a \textbf{friendly} person"}. All the ten specific prompts are in \cref{sec:imp-details}. To evaluate the models, we calculate MaxSkew@1000 \citep{geyik2019fairness}, which assesses the maximum imbalance in certain sensitive attributes within a dataset. As is shown in~\cref{tab:comp-fairness} \methodName\ outperforms
\begin{wraptable}{r}{0.45\textwidth}
    \vspace{-1em}
    \centering
    \caption{Comparison of \methodName\ with baselines on FairFace dataset. 
    \label{tab:comp-fairness}\vspace{-0.8em}}
    \resizebox{0.45\columnwidth}{!}{%
      \centering
        \begin{tabular}{lccccc}
            \toprule
            \multicolumn{1}{l}{\multirow{2}[4]{*}{Method / MaxSkew@1000}} &  \multicolumn{2}{c}{CLIP ViT-B/32} &&  \multicolumn{2}{c}{CLIP ViT-L/14} \\
                \cmidrule{2-3}  \cmidrule{5-6} &   Sex   & Race    &&  Sex   & Race \\
            \midrule
            Zero-Shot {\tiny \citep{radford2021learning}}   &  0.206  &  0.743 &&  0.206  &  0.768 \\
            Orth-Proj {\tiny \citep{chuang2023debiasing}}   &  0.146  &  0.755 &&  0.349  &  0.605 \\
            Orth-Cali {\tiny \citep{chuang2023debiasing}}   &  \underline{0.102}  &  \underline{0.638} &&  \underline{0.200}  &  \underline{0.461} \\
            \methodName{} (ours)     &  \textbf{0.097}  &  \textbf{0.408} && \textbf{ 0.099 } &  \textbf{0.428} \\
            \bottomrule
        \end{tabular}
        }
\vspace{-1.5em}
\end{wraptable}
the other baselines for both sensitive attributes across two different \review{CLIP} backbones.

In summary, the results in \cref{tab:grouprobust}, \cref{tab:comp-fairness}, and \cref{tab:celeba-intrinsic} suggest that \methodName\ can effectively mitigate the demographic bias caused by spurious correlation and intrinsic dependency in the data in both w/ and w/o the ground-truth labels settings.

\begin{wrapfigure}{r}{0.35\textwidth}
    \vspace{-1em}
    \centering
    \resizebox{1\linewidth}{!}{
    \centering
    \definecolor{color1}{HTML}{1f77b4} %
\definecolor{color2}{HTML}{ff7f0e} %
\definecolor{color3}{HTML}{2ca02c} %
\definecolor{color4}{HTML}{d62728} %
\definecolor{color5}{HTML}{9467bd} %
\definecolor{color6}{HTML}{8c564b} %
\definecolor{color7}{HTML}{e377c2} %
\definecolor{color8}{HTML}{7f7f7f} %
\definecolor{color9}{HTML}{bcbd22} %
\definecolor{color10}{HTML}{17becf} %

\begin{tikzpicture}
  \begin{axis}[ 
        xlabel=\large{WG (\%)}, 
        ylabel=\large{Avg (\%)},  
        xmin=-2, xmax=45, ymin=45, ymax=65, grid=major, grid style={dashed, line width=1pt, color=black!50}, legend style={at={(.99,0.01)},anchor=south east, font=\small, minimum width=2cm, minimum height=2em},
        xtick distance=10, ylabel near ticks,
        yticklabel style={/pgf/number format/fixed}, xticklabel style={/pgf/number format/fixed}
        ]
    
    \addplot[only marks, mark=triangle*, color6, mark size=7pt] coordinates {(0.0,55.2)};
    \addplot[only marks, mark=triangle*, color1, mark size=7pt] coordinates {(0.0,46.2)};
    \addplot[only marks, mark=triangle*, color5, mark size=7pt] coordinates {(9.3,63.0)};
    \addplot[only marks, mark=*, color4, mark size=7pt] coordinates {(41.4,63.2)};

    \legend{Zero-Shot {\small \citep{radford2021learning}}, ERM Adapter {\small \citep{gao2021clip}}, Contrastive Adapter {\small \citep{zhang2022contrastive}}, \textbf{\methodName\ (ours)}}
  \end{axis}
\end{tikzpicture}
    }
    \caption{Results of \methodName\ and baselines on CFD\label{fig:cfd}}
    \vspace{-1.5em}
\end{wrapfigure}
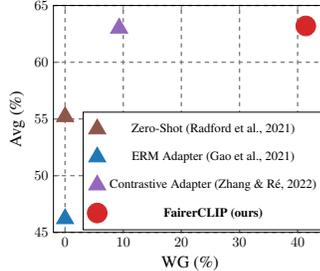
Next, we consider a more challenging task to evaluate the data-efficiency of \methodName. We use CFD images with \emph{attractive} and \emph{sex} as the target and sensitive group attributes. The former is a continuous label, which we binarize by using the mean value of all samples as a threshold. Moreover, the sex attribute is a binary label. This task presents challenges in two aspects. First, the number of samples in this dataset is very small (597 samples), which may not be sufficient for training some of the baselines. Second, the performance of the zero-shot classifier for this case shows that the features generated by the CLIP model are not well separated, rendering it difficult to correctly predict $\hat{S}$ (see \Cref{sec:app:cfd}). \cref{fig:cfd} shows the results of this experiment. We first observe that all the baselines almost completely fail at mitigating the bias for the worst group. In contrast, \methodName's performance is satisfyingly better, both in terms of the worst group (WG) and average (Avg) accuracy. Furthermore, the Gap is significantly lower (21.8\% vs 53.7\% for \citep{zhang2022contrastive}).

\vspace{-0.1em}
\subsection{Computational Efficiency of Training\label{sec:app:runtimes}}
\vspace{-0.1em}
\begin{wraptable}{r}{0.45\textwidth}
    \centering
    \vspace{-0.4cm}
    \caption{\small Training time comparison (in seconds).\label{tab:runtime}}
    \vspace{-0.1cm}
    \scalebox{0.6}{
    \begin{tabular}{lrcr}
        \toprule
        Method & Waterbirds && CelebA \\
        \midrule
        Contrastive Adapter\citep{zhang2022contrastive} &  1202  &&  20602 \\
        ERM Linear Probe\citep{kumar2022fine} &    157    && 2437  \\
        ERM Adapter\citep{gao2021clip} &   161   &&  1924 \\
        DFR (Subsample)\citep{kirichenko2022last} &  128   && 1878 \\
        DFR (Upsample)\citep{kirichenko2022last} &  176   && 2662 \\
        \methodName{} (ours) &  \textbf{32}    &&  \textbf{222}  \\
        \bottomrule
    \end{tabular}}
    \vspace{-1.5em}
\end{wraptable}
To show the computational efficiency of \methodName\, we report and compare the training time of \methodName\ and other baselines in \cref{tab:runtime}. The results show that \methodName\ is an order of magnitude faster than most baselines and almost two orders faster than Contrastive Adapter \citep{zhang2022contrastive}. The underlying model for this experiment is CLIP ViT-L/14, and all the numbers are measured on the same machine.

\vspace{-1em}
\section{Ablation Studies\label{sec:ablation}}
\vspace{-0.3em}
We conduct systematic ablation studies in different settings to investigate the effectiveness of individual components of our approach. The settings include spurious correlation, intrinsic dependency experiments, and scenarios where ground-truth labels are unavailable. The results are shown in \cref{tab:ablation}, where \cref{tab:ablation} (left) shows results of training w/ labels and \cref{tab:ablation} (right) shows training w/o labels. In the following, we describe each of these studies. For more ablation studies please refer to ~\cref{sec:app:abl}.

\textbf{Effect of} $\text{Dep}{(Z, Y)}$ \textbf{term}: Here we study the effect of $\text{Dep}{(Z, Y)}$ by only retaining $\text{Dep}{(Z, Y)}$ in the objective. In this case, the worst group accuracy dropped by 14.5\% for the group robustness experiment, and the EO increased to 0.195\% for the fairness experiment. Although the new features were still able to maintain good separation w.r.t. $Y$, they lost their debiasing ability to a large extent.

\textbf{Effect of} $\text{Dep}(Z_I, Z_T)$ \textbf{term:} We remove $\text{Dep}(Z_I, Z_T)$ to investigate its effect on the alignment between the image and text embeddings. Results show that while maintaining a similar worst group accuracy, there is a decrease in average accuracy for both under the w/ and w/o ground-truth setups. This result demonstrates the contribution of this component in improving the predictions. Similarly, the results for the fairness experiment show that $\text{Dep}(Z_I, Z_T)$ also aids in enhancing the debiasing ability of \methodName.

\textbf{Effect of updating $\hat{Y}$:} In this experiment, we predict $\hat{Y}$ once and fix it through the training process. Updating it during the training iterations has a considerable impact on worst group accuracy. The initial zero-shot accuracy for this group was $72.8\%$. Using the same initial $\hat{Y}$ during training improves the accuracy to $81.1\%$ while updating $\hat{Y}$ while training improves it further to $86.1\%$.

\begin{table}[t]
    \vspace{-1em}
    \caption{Ablation study w/ (left) and w/o (right) ground-truth labels on the CelebA dataset for different target attributes with sex as the sensitive attribute. We compare the effect of different components and parameters of \methodName\ on its performance. Both $\text{Dep}(Z_I, Z_T)$ and $\text{Dep}(Z, Y)$ prove to be necessary and effective in maximizing the metrics. All values are in \%.\label{tab:ablation}} 
    \begin{subtable}[h]{0.6\textwidth}
        \centering
        \scalebox{0.78}{
        \begin{tabular}{lrrrcrr}
        \toprule
        \multicolumn{1}{l}{\multirow{2}[4]{*}{Method}}  & \multicolumn{3}{c}{Blonde Hair} && \multicolumn{2}{c}{High cheekbones} \\
        \cmidrule{2-4} \cmidrule{6-7}                & \multicolumn{1}{c}{WG} & \multicolumn{1}{c}{Avg} & \multicolumn{1}{c}{Gap} && Avg   & EOD \\
        \midrule
        $\text{Dep}(Z, Y)$  only ($\tau=0$)    &   72.2   &   89.8     &  17.6     &&   83.8    & 6.4 \\
         w/o $\text{Dep}(Z_I, Z_T)$ ($\tau_{z}=0$)     &   87.0    &   88.7    &  1.7     &&   83.8    & 0.2 \\
        \methodName    &  86.7     &  89.3     &   2.6    &&       83.8    & 0.005  \\
        \bottomrule
        \end{tabular}}  
    \end{subtable}
    \begin{subtable}[h]{0.35\textwidth}
        \centering
        \scalebox{0.69}{
        \begin{tabular}{lrrr}
        \toprule
        \multicolumn{1}{l}{\multirow{2}[4]{*}{Method}}  & \multicolumn{3}{c}{Blonde Hair} \\
            \cmidrule{2-4}       & WG    & Avg   & Gap \\
        \midrule
        $\text{Dep}(Z, Y)$ only ($\tau=0$)   & 75.0    &  87.7     & 12.7       \\
         w/o $\text{Dep}(Z_I, Z_T)$ ($\tau_{z}=0$)    &   81.8    &   86.1    & 4.3 \\
         w/o updating $\hat{y}$     &    81.1   &   87.3    & 6.2  \\
        \methodName   &   86.1    &     88.8  & 2.7 \\
        \bottomrule
        \end{tabular}}
    \end{subtable}
    \vspace{-0.2cm}
\end{table}

\vspace{-1em}
\section{Related Work\label{sec:related-work}}
\vspace{-0.3em}
\textbf{\review{CLIP} and Bias:} Recent advancements in \review{CLIP like models} utilize multimodal data to learn representations that demonstrably generalize well to many downstream tasks and associated datasets~\citep{radford2021learning,desai2021virtex,singh2022flava,zellers2021merlot,zhang2021vinvl,alayrac2022flamingo}. \citet{radford2021learning} demonstrated that utilizing a simple pretraining task with massive amounts of image-text pairs collected from the Internet, lead to models with strong transferability on different downstream tasks. FLAVA~\citep{singh2022flava} learns representations by jointly pretraining on both unimodal and multimodal data. Flamingo~\citet{alayrac2022flamingo} demonstrated excellent generalization performance in few-shot and zero-shot scenarios. However, growing evidence~\citep{birhane2023hate, birhane2023into} shows these models suffer from spurious correlations and bias towards certain demographic groups. For instance, \citet{chuang2023debiasing} showed that textual prompt embeddings capture spurious correlations. In addition, \citet{agarwal2021evaluating} discovered that zero-shot prediction from \review{CLIP} representations showed a high misclassification rate for certain races. Similarly, \citet{wolfe2022evidence} observed that CLIP embeddings exhibit stereotypes about sex and race. \citet{dehdashtian2024utilityfairness} numerically characterize two near-optimal accuracy-fairness trade-offs and evaluate how far CLIP models are from them. Complementing these observations, we observe that \review{CLIP} exhibits high levels of demographic bias on the CFD and CelebA datasets.

\textbf{Debiasing \review{CLIP}:} Several approaches have been proposed to debias \review{CLIP} embeddings. \citet{wang2021gender} addressed bias in image search by combining balanced sampling and pruning spuriously correlated embeddings. \citet{wang2022fairclip} proposed a two-stage method that used learnable word vector prefixes and a re-representation Matrix for debiasing image retrieval problems. \citet{berg2022prompt} jointly trained an adversarial classifier and image-text contrastive loss, effectively reducing different bias measures. \citet{zhang2022contrastive} employed a contrastive adapter training strategy to enhance group robustness. Following the group robustness evaluation, \citet{chuang2023debiasing} proposed to remove bias from text embeddings by projecting out the biased direction with text data only. \citet{seth2023dear} adapted an additive residual learner module that separates the protected attribute information from the image representation generated by the visual encoder of \review{CLIP}.

\vspace{-1em}
\section{Concluding Remarks\label{sec:conclusion}}
\vspace{-0.3em}
This paper proposed \methodName\ to mitigate bias in zero-shot predictions \review{from CLIP}. It is versatile enough to mitigate bias caused by both spurious correlations and intrinsic dependencies in data and can be trained with or without ground-truth labels. Our key idea was to model the \review{CLIP} debiasing problem in reproducing kernel Hilbert spaces and employ a non-parametric statistical dependence measure that considers all linear and non-linear relations between the representation and the attribute of interest. Our solution in the form of an alternating optimization algorithm is effective across a diverse set of datasets, including Waterbirds, CelebA, FairFace, and the Chicago Face Database, spanning a variety of intrinsic dependencies and spurious correlations among attributes. Lastly, kernel-based approaches are underrepresented in current learning solutions, and \methodName\ shows their strong potential for the type of task considered in this paper due to its flexibility, ease of optimization, and promising performance.

\noindent\textbf{Acknowledgements:} This work was supported in part by the National Science Foundation (award \#2147116) and the Office of Naval Research (award \#N00014-23-1-2417).

{\small
\bibliographystyle{iclr2024_conference}
\bibliography{egbib}

\begin{thebibliography}{45}
\providecommand{\natexlab}[1]{#1}
\providecommand{\url}[1]{\texttt{#1}}
\expandafter\ifx\csname urlstyle\endcsname\relax
  \providecommand{\doi}[1]{doi: #1}\else
  \providecommand{\doi}{doi: \begingroup \urlstyle{rm}\Url}\fi

\bibitem[Adila et~al.(2023)Adila, Shin, Cai, and Sala]{adila2023zero}
Dyah Adila, Changho Shin, Linrong Cai, and Frederic Sala.
\newblock Zero-shot robustification of zero-shot models with foundation models.
\newblock \emph{arXiv preprint arXiv:2309.04344}, 2023.

\bibitem[Agarwal et~al.(2021)Agarwal, Krueger, Clark, Radford, Kim, and
  Brundage]{agarwal2021evaluating}
Sandhini Agarwal, Gretchen Krueger, Jack Clark, Alec Radford, Jong~Wook Kim,
  and Miles Brundage.
\newblock Evaluating clip: towards characterization of broader capabilities and
  downstream implications.
\newblock \emph{arXiv preprint arXiv:2108.02818}, 2021.

\bibitem[Alayrac et~al.(2022)Alayrac, Donahue, Luc, Miech, Barr, Hasson, Lenc,
  Mensch, Millican, Reynolds, et~al.]{alayrac2022flamingo}
Jean-Baptiste Alayrac, Jeff Donahue, Pauline Luc, Antoine Miech, Iain Barr,
  Yana Hasson, Karel Lenc, Arthur Mensch, Katherine Millican, Malcolm Reynolds,
  et~al.
\newblock Flamingo: a visual language model for few-shot learning.
\newblock \emph{Advances in Neural Information Processing Systems},
  35:\penalty0 23716--23736, 2022.

\bibitem[An et~al.(2023)An, Zhu, Panaitescu-Liess, Mummadi, and
  Huang]{an2023more}
Bang An, Sicheng Zhu, Michael-Andrei Panaitescu-Liess, Chaithanya~Kumar
  Mummadi, and Furong Huang.
\newblock More context, less distraction: Visual classification by inferring
  and conditioning on contextual attributes.
\newblock \emph{arXiv preprint arXiv:2308.01313}, 2023.

\bibitem[Bach \& Jordan(2002)Bach and Jordan]{bach2002kernel}
Francis~R Bach and Michael~I Jordan.
\newblock Kernel independent component analysis.
\newblock \emph{Journal of Machine Learning Research}, 3\penalty0
  (Jul):\penalty0 1--48, 2002.

\bibitem[Bahng et~al.(2020)Bahng, Chun, Yun, Choo, and Oh]{bahng2020learning}
Hyojin Bahng, Sanghyuk Chun, Sangdoo Yun, Jaegul Choo, and Seong~Joon Oh.
\newblock Learning de-biased representations with biased representations.
\newblock In \emph{International Conference on Machine Learning}, 2020.

\bibitem[Bengio et~al.(2013)Bengio, Courville, and
  Vincent]{bengio2013representation}
Yoshua Bengio, Aaron Courville, and Pascal Vincent.
\newblock Representation learning: A review and new perspectives.
\newblock \emph{IEEE Transactions on Pattern Analysis and Machine
  Intelligence}, 35\penalty0 (8):\penalty0 1798--1828, 2013.

\bibitem[Berg et~al.(2022)Berg, Hall, Bhalgat, Yang, Kirk, Shtedritski, and
  Bain]{berg2022prompt}
Hugo Berg, Siobhan~Mackenzie Hall, Yash Bhalgat, Wonsuk Yang, Hannah~Rose Kirk,
  Aleksandar Shtedritski, and Max Bain.
\newblock A prompt array keeps the bias away: Debiasing vision-language models
  with adversarial learning.
\newblock \emph{arXiv preprint arXiv:2203.11933}, 2022.

\bibitem[Birhane et~al.(2023{\natexlab{a}})Birhane, Prabhu, Han, and
  Boddeti]{birhane2023hate}
Abeba Birhane, Vinay Prabhu, Sang Han, and Vishnu~Naresh Boddeti.
\newblock On hate scaling laws for data-swamps.
\newblock \emph{arXiv preprint arXiv:2306.13141}, 2023{\natexlab{a}}.

\bibitem[Birhane et~al.(2023{\natexlab{b}})Birhane, Prabhu, Han, Boddeti, and
  Luccioni]{birhane2023into}
Abeba Birhane, Vinay~Uday Prabhu, Sanghyun Han, Vishnu~Naresh Boddeti, and
  Sasha Luccioni.
\newblock Into the laion's den: Investigating hate in multimodal datasets.
\newblock \emph{Advances in Neural Information Processing Systems},
  2023{\natexlab{b}}.

\bibitem[Chen et~al.(2023)Chen, Lee, Setlur, Levine, and Finn]{chen2023project}
Annie~S Chen, Yoonho Lee, Amrith Setlur, Sergey Levine, and Chelsea Finn.
\newblock Project and probe: Sample-efficient domain adaptation by
  interpolating orthogonal features.
\newblock \emph{arXiv preprint arXiv:2302.05441}, 2023.

\bibitem[Chouldechova(2017)]{chouldechova2017fair}
Alexandra Chouldechova.
\newblock Fair prediction with disparate impact: A study of bias in recidivism
  prediction instruments. big data 5, 2 (2017), 153--163.
\newblock \emph{arXiv preprint arXiv:1610.07524}, 2017.

\bibitem[Chuang et~al.(2023)Chuang, Jampani, Li, Torralba, and
  Jegelka]{chuang2023debiasing}
Ching-Yao Chuang, Varun Jampani, Yuanzhen Li, Antonio Torralba, and Stefanie
  Jegelka.
\newblock Debiasing vision-language models via biased prompts.
\newblock \emph{arXiv preprint arXiv:2302.00070}, 2023.

\bibitem[Dehdashtian et~al.(2024)Dehdashtian, Sadeghi, and
  Boddeti]{dehdashtian2024utilityfairness}
Sepehr Dehdashtian, Bashir Sadeghi, and Vishnu Boddeti.
\newblock Utility-fairness trade-offs and how to find them.
\newblock In \emph{IEEE/CVF Conference on Computer Vision and Pattern
  Recognition}, 2024.

\bibitem[Desai \& Johnson(2021)Desai and Johnson]{desai2021virtex}
Karan Desai and Justin Johnson.
\newblock Virtex: Learning visual representations from textual annotations.
\newblock In \emph{IEEE/CVF Conference on Computer Vision and Pattern
  Recognition}, pp.\  11162--11173, 2021.

\bibitem[Du et~al.(2022)Du, Wei, Zhang, Shi, Gao, and Li]{du2022learning}
Yu~Du, Fangyun Wei, Zihe Zhang, Miaojing Shi, Yue Gao, and Guoqi Li.
\newblock Learning to prompt for open-vocabulary object detection with
  vision-language model.
\newblock In \emph{IEEE/CVF Conference on Computer Vision and Pattern
  Recognition}, 2022.

\bibitem[Fukumizu et~al.(2007)Fukumizu, Bach, and
  Gretton]{fukumizu2007statistical}
Kenji Fukumizu, Francis~R Bach, and Arthur Gretton.
\newblock Statistical consistency of kernel canonical correlation analysis.
\newblock \emph{Journal of Machine Learning Research}, 8\penalty0 (2), 2007.

\bibitem[Gao et~al.(2021)Gao, Geng, Zhang, Ma, Fang, Zhang, Li, and
  Qiao]{gao2021clip}
Peng Gao, Shijie Geng, Renrui Zhang, Teli Ma, Rongyao Fang, Yongfeng Zhang,
  Hongsheng Li, and Yu~Qiao.
\newblock Clip-adapter: Better vision-language models with feature adapters.
\newblock \emph{arXiv preprint arXiv:2110.04544}, 2021.

\bibitem[Geyik et~al.(2019)Geyik, Ambler, and Kenthapadi]{geyik2019fairness}
Sahin~Cem Geyik, Stuart Ambler, and Krishnaram Kenthapadi.
\newblock Fairness-aware ranking in search \& recommendation systems with
  application to linkedin talent search.
\newblock In \emph{ACM SIGKDD International Conference on Knowledge Discovery
  and Data Mining}, 2019.

\bibitem[Gretton et~al.(2005)Gretton, Herbrich, Smola, Bousquet, and
  Sch{\"o}lkopf]{gretton2005kernel}
Arthur Gretton, Ralf Herbrich, Alexander Smola, Olivier Bousquet, and Bernhard
  Sch{\"o}lkopf.
\newblock Kernel methods for measuring independence.
\newblock \emph{Journal of Machine Learning Research}, 6\penalty0
  (12):\penalty0 2075--2129, 2005.

\bibitem[Gretton et~al.(2012)Gretton, Borgwardt, Rasch, Sch{\"o}lkopf, and
  Smola]{gretton2012kernel}
Arthur Gretton, Karsten~M Borgwardt, Malte~J Rasch, Bernhard Sch{\"o}lkopf, and
  Alexander Smola.
\newblock A kernel two-sample test.
\newblock \emph{Journal of Machine Learning Research}, 13\penalty0
  (1):\penalty0 723--773, 2012.

\bibitem[Hardt et~al.(2016)Hardt, Price, and Srebro]{hardt2016equality}
Moritz Hardt, Eric Price, and Nati Srebro.
\newblock Equality of opportunity in supervised learning.
\newblock \emph{Advances in Neural Information Processing Systems}, 29, 2016.

\bibitem[Jung et~al.(2021)Jung, Lee, Park, and Moon]{jung2021fair}
Sangwon Jung, Donggyu Lee, Taeeon Park, and Taesup Moon.
\newblock Fair feature distillation for visual recognition.
\newblock In \emph{IEEE/CVF Conference on Computer Vision and Pattern
  Recognition}, 2021.

\bibitem[Jung et~al.(2022)Jung, Chun, and Moon]{jung2022learning}
Sangwon Jung, Sanghyuk Chun, and Taesup Moon.
\newblock Learning fair classifiers with partially annotated group labels.
\newblock In \emph{IEEE/CVF Conference on Computer Vision and Pattern
  Recognition}, 2022.

\bibitem[Karkkainen \& Joo(2021)Karkkainen and Joo]{karkkainen2021fairface}
Kimmo Karkkainen and Jungseock Joo.
\newblock Fairface: Face attribute dataset for balanced race, gender, and age
  for bias measurement and mitigation.
\newblock In \emph{IEEE/CVF Winter Conference on Applications of Computer
  Vision}, 2021.

\bibitem[Kirichenko et~al.(2022)Kirichenko, Izmailov, and
  Wilson]{kirichenko2022last}
Polina Kirichenko, Pavel Izmailov, and Andrew~Gordon Wilson.
\newblock Last layer re-training is sufficient for robustness to spurious
  correlations.
\newblock \emph{arXiv preprint arXiv:2204.02937}, 2022.

\bibitem[Koh et~al.(2021)Koh, Sagawa, Marklund, Xie, Zhang, Balsubramani, Hu,
  Yasunaga, Phillips, Gao, et~al.]{koh2021wilds}
Pang~Wei Koh, Shiori Sagawa, Henrik Marklund, Sang~Michael Xie, Marvin Zhang,
  Akshay Balsubramani, Weihua Hu, Michihiro Yasunaga, Richard~Lanas Phillips,
  Irena Gao, et~al.
\newblock Wilds: A benchmark of in-the-wild distribution shifts.
\newblock In \emph{International Conference on Machine Learning}, 2021.

\bibitem[Kokiopoulou et~al.(2011)Kokiopoulou, Chen, and
  Saad]{kokiopoulou2011trace}
Effrosini Kokiopoulou, Jie Chen, and Yousef Saad.
\newblock Trace optimization and eigenproblems in dimension reduction methods.
\newblock \emph{Numerical Linear Algebra with Applications}, 18\penalty0
  (3):\penalty0 565--602, 2011.

\bibitem[Kumar et~al.(2022)Kumar, Raghunathan, Jones, Ma, and
  Liang]{kumar2022fine}
Ananya Kumar, Aditi Raghunathan, Robbie Jones, Tengyu Ma, and Percy Liang.
\newblock Fine-tuning can distort pretrained features and underperform
  out-of-distribution.
\newblock \emph{arXiv preprint arXiv:2202.10054}, 2022.

\bibitem[Liu et~al.(2015)Liu, Luo, Wang, and Tang]{liu2015deep}
Ziwei Liu, Ping Luo, Xiaogang Wang, and Xiaoou Tang.
\newblock Deep learning face attributes in the wild.
\newblock In \emph{IEEE International Conference on Computer Vision}, pp.\
  3730--3738, 2015.

\bibitem[Ma et~al.(2015)Ma, Correll, and Wittenbrink]{ma2015chicago}
Debbie~S Ma, Joshua Correll, and Bernd Wittenbrink.
\newblock The chicago face database: A free stimulus set of faces and norming
  data.
\newblock \emph{Behavior Research Methods}, 47:\penalty0 1122--1135, 2015.

\bibitem[Quadrianto et~al.(2019)Quadrianto, Sharmanska, and
  Thomas]{quadrianto2019discovering}
Novi Quadrianto, Viktoriia Sharmanska, and Oliver Thomas.
\newblock Discovering fair representations in the data domain.
\newblock In \emph{IEEE/CVF Conference on Computer Vision and Pattern
  Recognition}, 2019.

\bibitem[Radford et~al.(2021)Radford, Kim, Hallacy, Ramesh, Goh, Agarwal,
  Sastry, Askell, Mishkin, Clark, et~al.]{radford2021learning}
Alec Radford, Jong~Wook Kim, Chris Hallacy, Aditya Ramesh, Gabriel Goh,
  Sandhini Agarwal, Girish Sastry, Amanda Askell, Pamela Mishkin, Jack Clark,
  et~al.
\newblock Learning transferable visual models from natural language
  supervision.
\newblock In \emph{International Conference on Machine Learning}, 2021.

\bibitem[Rahimi \& Recht(2007)Rahimi and Recht]{rahimi2007random}
Ali Rahimi and Benjamin Recht.
\newblock Random features for large-scale kernel machines.
\newblock \emph{Advances in Neural Information Processing Systems}, 20, 2007.

\bibitem[Sadeghi et~al.(2022)Sadeghi, Dehdashtian, and Boddeti]{sadeghi2022on}
Bashir Sadeghi, Sepehr Dehdashtian, and Vishnu Boddeti.
\newblock On characterizing the trade-off in invariant representation learning.
\newblock \emph{Transactions on Machine Learning Research}, 2022.
\newblock ISSN 2835-8856.
\newblock URL \url{https://openreview.net/forum?id=3gfpBR1ncr}.
\newblock Featured Certification.

\bibitem[Sagawa et~al.(2019)Sagawa, Koh, Hashimoto, and
  Liang]{sagawa2019distributionally}
Shiori Sagawa, Pang~Wei Koh, Tatsunori~B Hashimoto, and Percy Liang.
\newblock Distributionally robust neural networks for group shifts: On the
  importance of regularization for worst-case generalization.
\newblock \emph{arXiv preprint arXiv:1911.08731}, 2019.

\bibitem[Seth et~al.(2023)Seth, Hemani, and Agarwal]{seth2023dear}
Ashish Seth, Mayur Hemani, and Chirag Agarwal.
\newblock Dear: Debiasing vision-language models with additive residuals.
\newblock \emph{arXiv preprint arXiv:2303.10431}, 2023.

\bibitem[Singh et~al.(2022)Singh, Hu, Goswami, Couairon, Galuba, Rohrbach, and
  Kiela]{singh2022flava}
Amanpreet Singh, Ronghang Hu, Vedanuj Goswami, Guillaume Couairon, Wojciech
  Galuba, Marcus Rohrbach, and Douwe Kiela.
\newblock Flava: A foundational language and vision alignment model.
\newblock In \emph{IEEE/CVF Conference on Computer Vision and Pattern
  Recognition}, 2022.

\bibitem[Wang et~al.(2021)Wang, Liu, and Wang]{wang2021gender}
Jialu Wang, Yang Liu, and Xin~Eric Wang.
\newblock Are gender-neutral queries really gender-neutral? mitigating gender
  bias in image search.
\newblock \emph{arXiv preprint arXiv:2109.05433}, 2021.

\bibitem[Wang et~al.(2022)Wang, Zhang, and Sang]{wang2022fairclip}
Junyang Wang, Yi~Zhang, and Jitao Sang.
\newblock Fairclip: Social bias elimination based on attribute prototype
  learning and representation neutralization.
\newblock \emph{arXiv preprint arXiv:2210.14562}, 2022.

\bibitem[Wolfe et~al.(2022)Wolfe, Banaji, and Caliskan]{wolfe2022evidence}
Robert Wolfe, Mahzarin~R Banaji, and Aylin Caliskan.
\newblock Evidence for hypodescent in visual semantic ai.
\newblock In \emph{2022 ACM Conference on Fairness, Accountability, and
  Transparency}, pp.\  1293--1304, 2022.

\bibitem[Wortsman et~al.(2022)Wortsman, Ilharco, Kim, Li, Kornblith, Roelofs,
  Lopes, Hajishirzi, Farhadi, Namkoong, et~al.]{wortsman2022robust}
Mitchell Wortsman, Gabriel Ilharco, Jong~Wook Kim, Mike Li, Simon Kornblith,
  Rebecca Roelofs, Raphael~Gontijo Lopes, Hannaneh Hajishirzi, Ali Farhadi,
  Hongseok Namkoong, et~al.
\newblock Robust fine-tuning of zero-shot models.
\newblock In \emph{IEEE/CVF Conference on Computer Vision and Pattern
  Recognition}, pp.\  7959--7971, 2022.

\bibitem[Zellers et~al.(2021)Zellers, Lu, Hessel, Yu, Park, Cao, Farhadi, and
  Choi]{zellers2021merlot}
Rowan Zellers, Ximing Lu, Jack Hessel, Youngjae Yu, Jae~Sung Park, Jize Cao,
  Ali Farhadi, and Yejin Choi.
\newblock Merlot: Multimodal neural script knowledge models.
\newblock \emph{Advances in Neural Information Processing Systems},
  34:\penalty0 23634--23651, 2021.

\bibitem[Zhang \& R{\'e}(2022)Zhang and R{\'e}]{zhang2022contrastive}
Michael Zhang and Christopher R{\'e}.
\newblock Contrastive adapters for foundation model group robustness.
\newblock \emph{arXiv preprint arXiv:2207.07180}, 2022.

\bibitem[Zhang et~al.(2021)Zhang, Li, Hu, Yang, Zhang, Wang, Choi, and
  Gao]{zhang2021vinvl}
Pengchuan Zhang, Xiujun Li, Xiaowei Hu, Jianwei Yang, Lei Zhang, Lijuan Wang,
  Yejin Choi, and Jianfeng Gao.
\newblock Vinvl: Revisiting visual representations in vision-language models.
\newblock In \emph{IEEE/CVF Conference on Computer Vision and Pattern
  Recognition}, 2021.

\end{thebibliography}
}

\newpage

\appendix

\section{Appendix \label{sec:appendix}}
In our main paper, we proposed \methodName\ to debias the text and image features from pre-trained vision-language models. Here, we provide some additional analysis to support our main results. The supplementary material is structured as follows:

\begin{enumerate}
    \item Representation Disentanglement and Training Algorithm in \cref{sec:app:tec-details}
    \item Proofs Lemmas and Theorems in \cref{sec:proofs}
    \item Implementation details in \cref{sec:imp-details}
    \item Analysis of analytical computational complexity and memory complexity in \cref{sec:complexity}
    \item Effect of data size on the performance of \methodName\ in \cref{sec:data-size}
    \item More ablation studies in \cref{sec:app:abl}
    \item Comparison of CLIP ViT-L/14 and CLIP ResNet-50 in \cref{sec:app:information}
    \item Comparison of more than 100 Zero-Shot CLIP models on CFD in \cref{sec:app:cfd}
\end{enumerate}

\subsection{Techniqual Details \label{sec:app:tec-details}}
\textbf{Representation Disentanglement:} To ensure that the mapping functions avoid learning representations with redundant information where different dimensions are highly correlated with each other, we seek a compact~\citep{bengio2013representation} debiased embedding space. Therefore, we impose additional constraints on the representation. Specifically, we constrain the search space of the mapping functions $\bm f(\cdot)$ to learn a disentangled representation~\citep{bengio2013representation} as follows
\begin{eqnarray}\label{eq:A}
\mathcal A_r:=\left\{\left(f_1,\cdots, f_r\right) \,\big|\, f_i, f_j\in \mathcal H_X,\, \cov \left(f_i(X), f_j(X) \right) +\gamma\, \langle f_i, f_j\rangle_{\mathcal H_X}=\delta_{i,j}\right\}.
\end{eqnarray}
In the above set, $\cov \left(f_i(X), f_j(X) \right)$ part enforces the covariance matrix of $Z=\bm f(X)$ to be an identity matrix and encourages the variance of each entry of $Z$ to be one and different entries of $Z$ to be uncorrelated with each other. The regularization part, $\gamma\,\langle f_i, f_j\rangle_{\mathcal H_X}$ encourages the components of the mapping functions to be of unit norm and as orthogonal as possible to each other. These constraints also aid with numerical stability during empirical estimation~\citep{fukumizu2007statistical}.

\textbf{Training Algorithm:} 
Details of training \methodName\ are presented in Algorithm~\ref{alg:fairvlm-training}. \methodName\ uses representation of images, representation of texts corresponding to target attribute labels ($X_{T}$), and representation of text corresponding to sensitive attribute labels ($X_{TS}$) as its inputs. \methodName\'s goal is to find the image encoder ($\bm f^*_I$) and text encoder ($\bm f^*_T$) that can map the biased features generated by the \review{CLIP} to a debiased representation space. The training algorithm starts with initializing the label predictions. Since this algorithm is used for scenarios where we do not have access to the ground-truth labels of target attributes and sensitive attributes, we need to predict them by zero-shot classification from \review{CLIP} features. However, in scenarios where we have access to the true labels of the target attribute, we can skip the pseudo $Y$ initialization step and use the ground truth $Y$ instead. In the last step of initialization, we need to initialize the representation of images since we are using an alternating method to optimize the parameters of both the image encoder and text encoder. After the initialization step, we start to train both models alternatingly. After each optimization iteration, we update our prediction of $Y$ labels. After reaching the stop condition, the training process is complete.

\RestyleAlgo{ruled}
\SetKwComment{Comment}{/* }{ */}
\begin{algorithm}[ht]
\caption{\methodName{} Training Without Labels\label{alg:fairvlm-training}}
{\small \textbf{Input:} $\bm X_I \in \mathbb{R}^{n\times d}$, $\bm X_{T} \in \mathbb{R}^{|Y|\times d}$, $\bm X_{TS} \in \mathbb{R}^{|S|\times d}$, $m \in \mathbb{N}$} \\
{\small \textbf{Output:} $\bm{f}^*_I$, $\bm{f}^*_T$} \\ 
{\small \textbf{Initialize:}} \\
{\small $\hat{\bm Y}^{(0)} \gets \Big\{\forall \bm x_I \in \bm X_I, \bm x_{T} \in \bm X_{T} \Big| \underset{\bm x_{T}}{\text{argmax}} \frac{\bm x_I^T\bm x_{T}}{\| \bm x_I \| \| \bm x_{T} \| }\Big\} $\Comment*[l]{initialize pseudo Y}}
{\small $\hat{\bm S} \gets \Big\{\forall \bm x_I \in \bm X_I, \bm x_{TS} \in \bm X_{TS} \Big| \underset{\bm x_{TS}}{\text{argmax}} \frac{\bm x_I^T\bm x_{TS}}{\| \bm x_I \| \| \bm x_{TS} \| }\Big\} $\Comment*[l]{initialize pseudo S}}
{\small $\bm Z_I^{(0)} \gets \{\bm f^*_I(\bm X_I) | \bm f^*_I \leftarrow \underset{\bm f_I}{\text{argmax}} \mbox{ } J(\bm f_I, \tau_I, 0, \bm X_I, \hat{\bm Y}^{(0)}, \hat{\bm S}, \bm 0)\}$\Comment*[l]{\cref{eq:fixed}}}

{\small $i \gets 0$\;}

\While{$i < m$}{
{\small $\bm Z^{(i+1)}_{T} \gets \{\bm f^*_T(\bm X_{T}) | \bm f^*_T \leftarrow \underset{\bm f_T}{\text{argmax}} \mbox{ } J(\bm f_T, \tau_T, \tau_z, \bm X_{T}, \hat{\bm Y}^{(i)}, \hat{\bm S}, \bm Z_I^{(i)})\}$\Comment*[l]{solve \cref{eq:fixed}}}

{\small $\bm Z_I^{(i+1)} \gets \{\bm f^*_I(\bm X_I) | \bm f^*_I \leftarrow \underset{\bm f_I}{\text{argmax}} \mbox{ } J(\bm f_I, \tau_I, \tau_z, \bm X_I, \hat{\bm Y}^{(i)}, \hat{\bm S}, \bm Z_{T}^{(i)})\}$\Comment*[l]{solve \cref{eq:fixed}}}

{\small $\hat{\bm Y}^{(i+1)} \gets \Big\{\forall \bm z_I \in \bm Z^{(i+1)}_I, \bm z_{T} \in \bm Z^{(i+1)}_{T} \Big| \underset{\bm z_{T}}{\text{argmax}} \frac{\bm z_I^T \bm z_{T}}{\| \bm z_I \| \| \bm z_{T} \| }\Big\} $\Comment*[l]{refine pseudo Y}}
  
{\small $i \gets i + 1$}
}
\end{algorithm}

\subsection{Proofs \label{sec:proofs}}
\begin{lemma1}
Let $\bm K_{X_I},\bm K_{X_T}\in \mathbb R^{n\times n}$ be the Gram matrices corresponding to $\mathcal H_{X_I}$ and $\mathcal H_{X_T}$, respectively, i.e., $\left(\bm K_{X_I}\right)_{ij}=k_{X_I}(\bm x_{Ii}, \bm x_{Ij})$ and $\left(\bm K_{X_T}\right)_{ij}=k_S(\bm x_{Ti}, \bm x_{Tj})$, where covariance is empirically estimated as
\begin{eqnarray}
\cov\left(f_j(X_I), g_m(X_T) \right)\approx \frac{1}{n}\sum_{i=1}^n f_j(\bm x_{I_i}) g_m(\bm x_{T_i})
 -\frac{1}{n^2}\sum_{p=1}^n f_j(\bm x_{I_p}) \sum_{k=1}^n g_m(\bm x_{T_k}).\nn
\end{eqnarray}
It follows that, the corresponding empirical estimator for $\text{Dep}\left(Z_I, Z_T\right)$ is
\begin{eqnarray}
\text{Dep}\left(Z_I, Z_T\right)&=&\frac{1}{n^2}\left\|\bm{\Theta}_I \bm K_{X_I} \bm H \bm K_{X_T} \bm \Theta^T_T \right\|^2_F,
\end{eqnarray}
\end{lemma1}
\begin{proof}
\begin{eqnarray}
\text{Dep}(Z_I, Z_T) 
&:=& 
\sum_{m=1}^r\sum_{j=1}^r \left\{\frac{1}{n}\sum_{i=1}^n f_j(\bm x_{I_i}) g_m(\bm x_{T_i})
 -\frac{1}{n^2}\sum_{p=1}^n f_j(\bm x_{I_p}) \sum_{k=1}^n g_m(\bm x_{T_k})\right\}^2\nn\\
&=& 
\sum_{m=1}^r\sum_{j=1}^r  \Big\{\frac{1}{n}\bm \theta_{I_j}^T \bm K_{X_I} \bm K_{X_T} \bm \theta_{T_m}
-\frac{1}{n^2} \bm \theta_{I_j}^T \bm K_{X_I} \bm 1_n \bm 1_n^T \bm K_{X_T} \bm \theta_{T_m} \Big\}^2 \nn\\
&=& 
\sum_{m=1}^r\sum_{j=1}^r \left\{ \frac{1}{n} \bm \theta_{I_j}^T \bm K_{X_I} \bm H \bm K_{X_T} \bm \theta_{T_m}\right\}^2 \nn\\
&=& 
\sum_{m=1}^r \frac{1}{n^2} \left\|\bm \Theta_I \bm K_{X_I} \bm H \bm K_{X_T}\bm \theta_{T_m}\right\|_2^2 \nn\\
&=& 
\frac{1}{n^2}\left\|\bm \Theta_I \bm K_{X_I} \bm H \bm K_{X_T} \bm \Theta_T^T \right\|^2_F
\end{eqnarray}
\end{proof}

\begin{theorem1}
Let the Cholesky factorization of $\bm K_X$ be $\bm K_X=\bm L_X \bm L_X^T$,  where $\bm L_X\in \mathbb R^{n\times d}$ ($d\le n$) is a full column-rank matrix. Let $r\le d$, then a solution to
{\footnotesize
\begin{equation}
    \label{eq:fixed-app}
    \max_{\bm f \in A_r} \{J\left(\bm f, \tau, \tau_z, \bm X, \bm Y, \bm S, \bm Z_O\right):=\frac{1}{n^2}\left\|\bm \Theta \bm K_{X} \bm H \bm L_Y \right\|^2_F - \tau \frac{1}{n^2}\left\|\bm \Theta \bm K_{X} \bm H \bm L_S \right\|^2_F + \tau_z \frac{1}{n^2}\left\|\bm{\Theta} \bm{K}_X \bm H \bm Z_O \right\|^2_F \}
\end{equation}}
is
\begin{eqnarray}
\bm f^{\text{opt}}(X) = \bm \Theta^{\text{opt}} \left[k_X(\bm x_1, X),\cdots, k_X(\bm x_n, X)\right]^T\nn
\end{eqnarray}
where $\bm \Theta^{\text{opt}}=\bm U^T \bm L_X^\dagger$ and the columns of $\bm U$ are eigenvectors corresponding to the $r$ largest eigenvalues of the following generalized eigenvalue problem.
{\footnotesize
\begin{eqnarray}
\label{eq:eigen}
\bm L_X^T\left(\bm H\bm K_Y\bm H  - \tau\bm H\bm K_S\bm H + \tau_z\bm H \bm Z_O \bm Z_O^T \bm H\right)\bm{L}_X\bm u = \lambda \left(\frac{1}{n}\,\bm L_X^T \bm H \bm L_X + \gamma \bm I\right) \bm u.
\end{eqnarray}}
Furthermore, the supremum value of the objective function is equal to $\sum_{j=1}^r\lambda_j$, where $\{\lambda_1,\cdots,\lambda_r\}$ are $r$ largest eigenvalues of~\eqref{eq:eigen}.
\end{theorem1}
\begin{proof}
Using the representer theorem, the disentanglement property in 
\begin{eqnarray}\label{eq:A-app}
\mathcal A_r:=\left\{\left(f_1,\cdots, f_r\right) \,\big|\, f_i, f_j\in \mathcal H_X,\, \cov \left(f_i(X), f_j(X) \right) +\gamma\, \langle f_i, f_j\rangle_{\mathcal H_X}=\delta_{i,j}\right\}.
\end{eqnarray}
can be expressed as
\begin{eqnarray}
&&\cov \left(f_i(X),\,f_j(X) \right) + \gamma\, \langle f_i, f_j \rangle_{\mathcal H_{X}}\nn\\
&=& \frac{1}{n}\sum_{k=1}^nf_i(\bm x_k) f_j(\bm x_k) -\frac{1}{n^2}\sum_{k=1}^n f_i(\bm x_k)\sum_{m=1}^n f_j(\bm x_m) + \gamma\, \langle f_i, f_j \rangle_{\mathcal H_{X}} \nn\\
&=&\frac{1}{n}\sum_{k=1}^n \sum_{t=1}^n \bm K_{X} (\bm x_k, \bm x_t)\theta_{i t}
\sum_{m=1}^n \bm K_{X} (\bm x_k, \bm x_m)\theta_{j m}-\frac{1}{n^2}\bm \theta_i^T \bm K_{X} \bm 1_n \bm 1_n^T \bm K_{X}\bm \theta_j+\gamma\, \langle f_i, f_j \rangle_{\mathcal H_{X}}\nn\\
&=& \frac{1}{n} \left(\bm K_{X} \bm \theta_i\right)^T
\left(\bm K_X \bm \theta_j\right)-\frac{1}{n^2}\bm \theta_i^T \bm K_{X} \bm 1_n \bm 1_n^T \bm K_{X}\bm \theta_j+\gamma\,
\left\langle \sum_{k=1}^n \theta_{ik}k_{X}(\cdot, \bm x_k), \sum_{t=1}^n \theta_{it}k_{X}(\cdot, \bm x_t)\right\rangle_{\mathcal H_{X}} \nn\\
&=& \frac{1}{n} \bm \theta_i^T \bm K_{X}\bm H \bm K_{X} \bm \theta_j
+ \gamma\,\bm \theta^T_i \bm K_{X} \bm \theta_j\nn\\
&=& \frac{1}{n} \bm \theta_i^T \bm L_{X} \left(\bm L^T_{X}\bm H \bm L_{X} + n\gamma\,\bm I\right) \bm L^T_{X} \bm \theta_j\nn\\
&=&\delta_{i,j}.\nn
\end{eqnarray}
As a result, $\bm f\in \mathcal A_r$ is equivalent to 
\begin{eqnarray}
\bm \Theta \bm L_{X} \underbrace{\Big( \frac{1}{n}\bm L^T_{X}\bm H \bm L_{X} + \gamma\bm I\Big)}_{:=\bm C} \bm L^T_{X} \bm \Theta^T= \bm I_r\nn,
\end{eqnarray}
where $\bm \Theta:=\big[ \bm \theta_1,\cdots, \bm \theta_r\big]^T\in \mathbb R^{r\times n}$.

Let $\bm V = \bm L_{X}^T\bm \Theta ^T $ and consider the optimization problem in~\eqref{eq:fixed}:
\begin{eqnarray}\label{eq:trace}
 &&  %
 \sup_{\bm f \in \mathcal A_r} \frac{1}{n^2}\left\{\left\|\bm \Theta \bm K_{X} \bm H \bm L_{Y} \right\|^2_F
 - \tau \left\|\bm \Theta \bm K_{X} \bm H \bm L_{S} \right\|^2_F
 + \tau_z \left\|\bm{\Theta} \bm{K}_X \bm H \bm Z_O \right\|^2_F\right\}\nn\\
  &=& %
  \sup_{\bm f \in \mathcal A_r } \frac{1}{n^2}\left\{\text{Tr}\left\{\bm \Theta \bm K_{X} \bm H \bm K_{Y} \bm H \bm K_{X}\bm \Theta^T\right\}
  - \tau  \text{Tr}\left\{\bm \Theta \bm K_{X} \bm H \bm K_{S}  \bm H \bm K_{X}\bm \Theta^T\right\}
 + \tau_z \text{Tr}\left\{\bm \Theta \bm{K}_X \bm H \bm Z_O\bm Z_O^T   \bm H \bm{K}_X\bm \Theta^T\right\}\right\}\nn\\
&=&\max_{\bm V^T \bm C \bm V = \bm I_r} \frac{1}{n^2} \text{Tr} \left\{\bm\Theta \bm L_{X}  \bm B \bm L_{X}^T \bm \Theta^T\right\}\nn\\
&=&\max_{\bm V^T \bm C \bm V = \bm I_r} \frac{1}{n^2} \text{Tr} \left\{ \bm V^T  \bm B \bm V \right\}
 \end{eqnarray}
where
\begin{eqnarray}
\bm B&:=&\bm L^T_{X}\left(\bm H \bm K_{Y} \bm H - \tau \bm H \bm K_{S} \bm H + \tau_z \bm H \bm Z_O\bm Z_O^T \bm H \right)\bm L_{X}\nn
\end{eqnarray}
It is shown in~\cite{kokiopoulou2011trace} that an\footnote{Optimal $\bm V$ is not unique.} optimizer of~(\ref{eq:trace}) is any matrix $\bm U$ whose columns are eigenvectors corresponding to $r$ largest eigenvalues of generalized problem
\begin{eqnarray}\label{eq:eig-gen-proof}
\bm B \bm u = \tau \,\bm C \bm u 
\end{eqnarray}
and the maximum value is the summation of $r$ largest eigenvalues. Once $\bm U$ is determined, then, any $\bm \Theta$ in which $\bm L_{X}^T\bm \Theta^T = \bm U$ is optimal $\bm \Theta$ (denoted by $\bm \Theta^{\text{opt}}$).
Note that $\bm \Theta^{\text{opt}}$ is not unique and has a general form of
\begin{eqnarray}
\bm \Theta^T = \left( \bm L_{X}^T\right)^\dagger \bm U + \bm \Lambda_0, \quad  \mathcal R(\bm \Lambda_0)\subseteq \mathcal N \left( \bm L^T_{X}\right).\nn
\end{eqnarray}
However, setting $\bm \Lambda_0$ to zero would lead to minimum norm for  $\bm \Theta$. Therefore, we opt $\bm \Theta^{\text{opt}}=\bm U^T \bm L_{X}^\dagger$.
\end{proof}

\subsection{Implementation Details \label{sec:imp-details}}
We conducted experiments on CelebA, Waterbirds, FairFace, and the Chicago Face Dataset (CFD). For CelebA and Waterbirds, we follow their official train/val/test splits and only use ground truth labels from the val split for hyperparameter tuning. For CFD, since there is no official dataset split, we randomly split it with a ratio of 0.5/0.1/0.4 for train/val/test. Following the standard setting \cite{zhang2022contrastive}, we use val split to decide the optimal $\tau$, $\tau_z$, and dimensionality of the random Fourier features (RFF). For CelebA, the optimal $\tau$, $\tau_z$, and RFF dimensions are 0.8, 0.5, and 8000. For Waterbirds, the optimal $\tau$, $\tau_z$, and RFF dimensions are 0.7, 0.7, and 3000. And for CFD, the optimal $\tau$, $\tau_z$, and RFF dimensions are 0.6, 0.3, and 1000. In the scenario where the group labels are not available, we follow the same setup as the scenario where the group labels of the val split are available. For the CelebA dataset, we also conduct a pre-sampling process on the training split to balance the number of each class from the predicted $\hat{y}$.

For the FairFace dataset, we use 10 text prompts that are unrelated to the facial attributes or the sensitive attributes of the samples. In this setting, the sensitive attribute is gender or race, and the text prompts are constructed as \emph{``This is a photo of a [attribute] person"} where \emph{[attribute]} can be one of the elements of the \{good, evil, smart, dumb, attractive, unattractive, lawful, criminal, friendly, unfriendly\} set.

\review{For all the above-mentioned experiments under different settings, we set the representation dimensionality $r$ to $c - 1$ where $c$ is the number of classes of the downstream target task.}

\review{For clarity, we summarized all the above-mentioned implementation details in \cref{tab:implementation-details}}
\review{
\begin{table}[h]
    \centering
    \caption{Implementation details of \methodName\ for each dataset.}
    \begin{tabular}{lccccccc}
    \toprule
    Dataset    & RFF Dim.  & $r$ & $\tau$ & $\tau_Z$ & Train/Val/Test & Training samples \\
    \midrule
    CelebA     &  8000 & 1 & 0.8 & 0.5 & Official Splits & 162,770\\
    Waterbirds &  3000 & 1 & 0.7 & 0.7 & Official Splits & 4,795\\
    FairFace   &  3000 & 1 & 0.8 & 0.8 & Official Splits & 86,744\\
    CFD        &  1000 & 1 & 0.6 & 0.3 & 0.5/0.1/0.4     & 298 \\
    \bottomrule
    \end{tabular}
    \label{tab:implementation-details}
\end{table}
}

\subsection{Numerical Complexity\label{sec:complexity}}
\noindent\textbf{Computational Complexity:} If $\bm L_X$ in~\eqref{eq:eigen} is provided in the training dataset, then the computational complexity of obtaining the optimal encoder is $\mathcal O(l^3)$, where $l\le n$ is the numerical rank of the Gram matrix $\bm K_X$. However, the dominating part of the computational complexity is due to the Cholesky factorization, $\bm K_X=\bm L_X \bm L_X^T$, which is $\mathcal O(n^3)$. Using random Fourier features (RFF)~\citep{rahimi2007random}, $k_X(\bm x, \bm x^\prime)$ can be approximated by $\bm r_X(\bm x)^T\, \bm r_X(\bm x^\prime)$, where $\bm r_X(\bm x)\in \mathbb R^{d}$. In this situation, the Cholesky factorization can be directly calculated as
\begin{equation}
    \bm L_X = \begin{bmatrix}
    \bm r_X(\bm x_1)^T\\
    \vdots\\
    \bm r_X(\bm x_n)^T
    \end{bmatrix}\in \mathbb R^{n\times d}.
\end{equation}
As a result, the computational complexity of obtaining the optimal encoder becomes $\mathcal O(d^3)$, where the RFF dimension, $d$, can be significantly less than the sample size $n$ with negligible error on the approximation $k_X(\bm x, \bm x^\prime)\approx\bm r_X(\bm x)^T\, \bm r_X(\bm x^\prime)$.

\noindent\textbf{Memory Complexity:}
The memory complexity of~\eqref{eq:eigen}, if calculated naively, is $\mathcal O(n^2)$ since $\bm K_Y$, $\bm K_S$, and $\bm Z_O \bm Z_O^T$ are $n$ by $n$ matrices. However, using RFF together with Cholesky factorization $\bm K_Y=\bm L_Y \bm L_Y^T$, $\bm K_S=\bm L_S \bm L_S^T$, the left-hand side of~\eqref{eq:eigen} can be re-arranged as
\begin{equation}
    \left(\bm L^T_X \tilde{\bm L}_Y\right) \left(\tilde{\bm L}^T_Y\bm L_X\right) 
    -\tau \left(\bm L^T_X \tilde{\bm L}_S\right) \left(\tilde{\bm L}^T_S\bm L_X\right) 
    + \tau_z \left(\bm L^T_X \tilde{\bm Z}_O\right) \left(\tilde{\bm Z}^T_O\bm L_X\right),
\end{equation}
where $\tilde{\bm Z}^T_O=\bm H \bm Z_O =\bm Z_O - \frac{1}{n}\bm 1_n (\bm 1_n^T\bm Z_O) $ and $\tilde{\bm L}^T_Y=\bm H \bm L_Y=\bm L_Y - \frac{1}{n}\bm 1_n (\bm 1_n^T\bm L_Y)$; therefore, the required memory complexity is $\mathcal O(nd)$. Note that $\tilde{\bm L}^T_S$ and $\bm H \bm L_X$ can be calculated similarly.

\subsection{Effect of Data Size on the performance of \methodName \label{sec:data-size}}
To evaluate the effectiveness of the \methodName\ under limited data samples condition we report the performance of \methodName\ as the size of the training dataset is varied. In this experiment, we randomly sampled 5, 25, 50, 75, and 100 percent of the training data as our training set. Then the Avg., WG, and Gap are evaluated. Table~\ref{tab:data-size} shows the results of the evaluation for the Waterbird and CelebA datasets. The results indicate that \methodName\ is able to perform sufficiently well when a small sub-sample of the dataset is used for its training. More specifically, in the CelebA dataset, \methodName\ is only losing $2.9\%$ of its WG accuracy when only $25\%$ of the original training data is employed in its training phase. Moreover, on Waterbirds, a relatively smaller dataset, it loses less than $6\%$ of WG accuracy when only $50\%$ of training data is used. This experiment shows the effectiveness of \methodName\ under a limited number of training samples.

\begin{table}[ht]
    \centering
    \caption{Effect of training data size on the performance of \methodName \label{tab:data-size}}
        \begin{tabular}{lccccccc}
            \toprule
            \multicolumn{1}{l}{\multirow{2}[4]{*}{\# Samples}} & \multicolumn{3}{c}{CelebA}  &&  \multicolumn{3}{c}{Waterbird} \\
                \cmidrule{2-4}  \cmidrule{6-8} &   Avg. ($\uparrow$)   & WG ($\uparrow$)    & Gap  ($\downarrow$) &&  Avg. ($\uparrow$)   &   WG ($\uparrow$)    & Gap  ($\downarrow$) \\
            \midrule
            $5\%$    &  84.21  &	73.88  &	10.32  &&  86.31  &	 65.26  & 21.05 \\
            $25\%$   &  88.26  &	83.88  &	4.37   &&  86.54  &	 78.66  & 7.88  \\
            $50\%$   &  86.65  &	84.29  &	2.36   &&  89.11  &	 81.31  & 7.80  \\
            $75\%$   &  90.44  &	85.56  &	4.88   &&  87.04  &	 84.11  & 2.92  \\
            $100\%$  &  89.3   &    86.7   &	2.6    &&  92.30  &	 87.70  & 4.60  \\
            \bottomrule
        \end{tabular}
\end{table}
\subsection{More Ablation Studies\label{sec:app:abl}}
In \ref{sec:ablation}, we studied the effect of different components of \methodName\ such as $\text{Dep}(Z,Y)$, $\text{Dep}(Z_I, Z_T)$, and updating the prediction of the target labels in each iteration. Here, we add another ablation study on the effect of our control hyper-parameters, $\tau$ and $\tau_z$ on the performance of the method. 

\textbf{Effect of  $\tau$ and $\tau_z$:} We illustrate the performance of average accuracy and worst group’s accuracy for varying values of $\tau$ and $\tau_z$ in Figure~\ref{fig:heatmap}. First, as $\tau$ and $\tau_z$ vary, there is a smooth change in the average and worst group accuracy, which demonstrates the stability of \methodName. Second, as $\tau$ increases, the worst group’s accuracy also improves, which alludes to the effectiveness of $\tau$ as a control for the degree of debiasing. However, when $\tau$ reaches a certain value, a further increase in its value leads to a degradation in the worst group’s accuracy.
 Similarly, $\tau_z$ also plays a gradual but noticeable effect on improving both the average and worst group accuracy for a given value of $\tau$.
 
\begin{figure}
    \centering
    \includegraphics[width=0.8\textwidth]{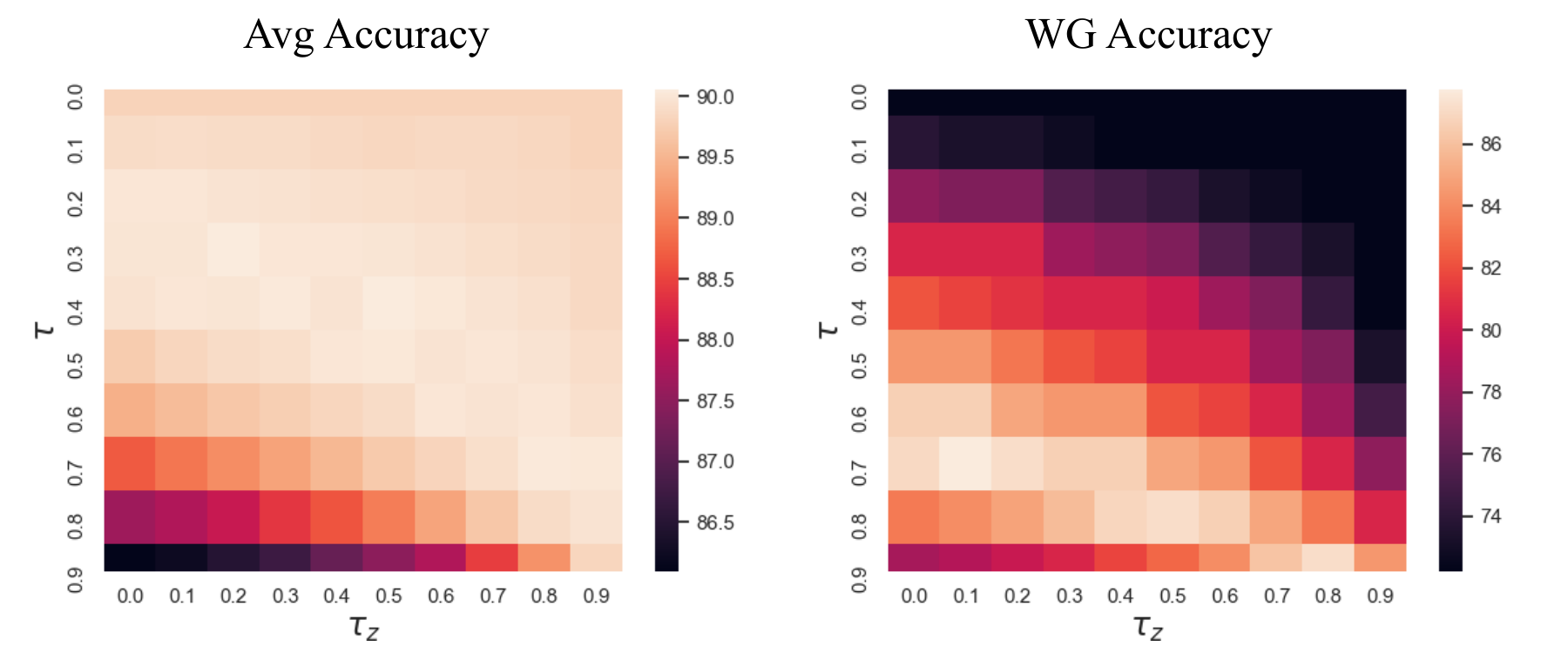}
    \caption{Effect of $\tau$ and $\tau_z$ on the average accuracy (left) and worst group accuracy (right). For every fixed $\tau_z$, the worst group accuracy increases with $\tau$ and then starts to decrease beyond a point. Additionally, the variation of the metrics is smooth over the range of hyperparameters, indicating lower sensitivity towards them.\label{fig:heatmap}}
\end{figure}

\subsection{Comparing Features of CLIP ViT-L/14 and CLIP ResNet-50 \label{sec:app:information}}
\begin{table}[ht]
    \centering
    \caption{Comparison between CLIP ViT-L/14 and CLIP ResNet-50 in terms of the amount of the information from $Y$ and $S$ that each of them can embed into their feature space on the Waterbirds.\label{tab:information}}
        \begin{tabular}{lccccc}
            \toprule
             \multicolumn{1}{l}{\multirow{2}[4]{*}{}} & \multicolumn{2}{c}{($X, Y$)}  &&  \multicolumn{2}{c}{($X, S$)} \\
                \cmidrule{2-3}  \cmidrule{5-6} &   HSIC   & KCC    && HSIC &  KCC \\
            \midrule
            CLIP ViT-L/14    &  0.1849  &	0.8267  &&	0.2392   &  0.8661 \\
            CLIP ResNet-50   &  0.1423  &	0.7556  &&	0.3823   &  0.8861 \\
            \bottomrule
        \end{tabular}
\end{table}

To compare the features of CLIP ViT-L/14 and CLIP ResNet-50, we measure the amount of information from the target attribute and sensitive attribute contained in their generated representations. The embedded information is measured in terms of statistical dependency between the features and their ground-truth $Y$ and $S$ labels. To calculate these dependencies, Hilbert-Schmidt independence criterion (HSIC) \citep{gretton2005kernel} and  Kernel Canonical Covariance (KCC) \cite{bach2002kernel} are used. \cref{tab:information} compares these two CLIP models. From the table, we can observe that CLIP ViT-L/14 embeds more information about the target $Y$ while containing less information about $S$ which indicates that the former provides better features for the Waterbirds dataset.

\subsection{Comparing more than 100 Zero-Shot CLIP models on CFD \label{sec:app:cfd}}

As we mentioned in \Cref{sec:exp:results:spurious}, zero-shot classification on CFD is a difficult task for the OpenAI CLIP model. In \Cref{fig:clip-cfd}, we show that a majority of other publicly available CLIP models suffer similarly on CFD. In fact, several models achieve only near-zero WG accuracy, irrespective of their parameter count and the training dataset.

\begin{figure}[ht]
    \centering
    \includegraphics[width=0.9\columnwidth]{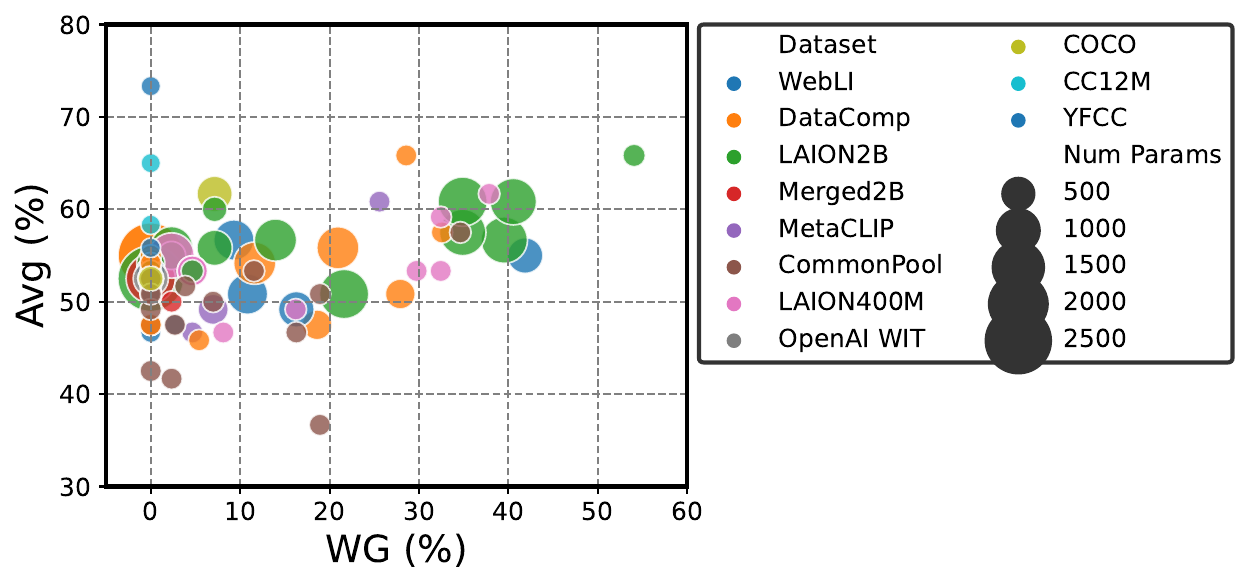}
    \caption{Comparison of more than 100 publicly available CLIP models zero-shot performance on CFD dataset. Colors show the pre-trained dataset and sizes show the number of parameters of each model.\label{fig:clip-cfd}}
\end{figure}

\end{document}